\definecolor{darkblue}{rgb}{0,0.09,0.45}
\DeclareMathOperator{\E}{\mathbb{E}}
\DeclareMathOperator{\R}{\mathbbm{R}}
\DeclareMathOperator{\argmax}{argmax} 
\newcommand{\y}{\mathbf{y}}
\newcommand{\KL}{\mathrm{KL}}
\newcommand{\true}{\mathrm{true}}
\newcommand{\x}{\mathbf{x}}
\newcommand{\X}{\mathbf{X}}
\newcommand{\BB}{\bm{\beta}}
\newcommand{\EE}{\bm{\epsilon}}
\newcommand{\I}{\mathbf{I}}
\newcommand{\zero}{\mathbf{0}}
\DeclareMathOperator{\Var}{\mathrm{Var}}
\DeclareMathOperator{\n}{\mbox{normal}}
\DeclareMathOperator{\MVN}{\mbox{MVN}}
\DeclareMathOperator{\RS}{\mathrm{RS}}
\theoremstyle{plain}
\newtheorem{theorem}{Theorem}
\newtheorem{proposition}[theorem]{Proposition}
\newtheorem{corollary}[theorem]{Corollary}
\theoremstyle{definition}
\theoremstyle{remark}
\icmltitlerunning{Predictive variational inference}
\begin{document}

\twocolumn[
  \icmltitle{Predictive variational inference:\\
Learn the predictively optimal posterior distribution }



  \icmlsetsymbol{equal}{*}

  \begin{icmlauthorlist}
    \icmlauthor{Jinlin Lai}{xxx}
    \icmlauthor{Antonio Ricardo Linero}{yyy}
    \icmlauthor{Yuling Yao}{yyy}
  \end{icmlauthorlist}

  \icmlaffiliation{xxx}{College of Information and Computer Sciences, University of Massachusetts Amherst}
  \icmlaffiliation{yyy}{Department of Statistics and Data Sciences, University of Texas, Austin}

  \icmlcorrespondingauthor{Yuling Yao}{yyao@austin.utexas.edu}

  \icmlkeywords{Machine Learning, ICML}

  \vskip 0.3in
]



\printAffiliationsAndNotice{}  

\begin{abstract}
		Vanilla variational inference finds an optimal approximation to the Bayesian posterior distribution, but even the exact Bayesian posterior is often not meaningful under model misspecification. We propose predictive variational inference (PVI): a general inference framework that seeks and samples from an optimal posterior density such that the 
resulting posterior predictive distribution is as close to the true data generating process as possible, while this closeness is measured by multiple scoring rules. 
By optimizing the  objective, the predictive variational inference is generally not the same as, or even attempting to approximate, the Bayesian posterior, even asymptotically. Rather, we interpret it as an implicit hierarchical expansion.
Further, the learned posterior uncertainty detects heterogeneity of parameters among the population, enabling automatic model diagnosis.
This framework applies to both likelihood-exact and likelihood-free models. We demonstrate its application in  real data examples.	
\end{abstract}

\section{Introduction}

Traditional Bayesian inference has often been used for one of two purposes: estimation of parameters $\theta$ in a model $p(y | \theta)$ or estimation of a \emph{predictive distribution} $p(\tilde y | y)$ to give a probabilistic forecast of a future observation $\tilde y$. In many settings, such as decision-making, machine learning, or forecasting, the explicit aim is prediction rather than parameter estimation; fortunately, when both the likelihood $p(y | \theta)$ and prior $p^{\mathrm{prior}}(\theta)$ are correctly-specified, traditional Bayesian inference is optimal for both purposes.

There is, however, an increasing awareness of  limitations of Bayes under  misspecification \citep{walker2013bayesian, gelman2020bayesian, masegosa2020learning, huggins2023reproducible, fong2023martingale}. Assume the observations   $y = (y_1, \dots, y_n)$ are independent and identically distributed (IID), and let $\tilde y$ to be the next unseen data. The Bayesian posterior predictive distribution of $\tilde y$ is the sampling distribution averaged over the posterior density:  $p^{\mathrm{Bayes}}(\tilde y|y) =  \int_\Theta p(\tilde y|\theta) p(\theta|y) d\theta.$
But is this exact Bayesian  prediction ``optimal''? Or is  this  Bayesian prediction at least more  ``optimal'' than some point-estimate induced prediction $p(\tilde y | \hat \theta)$ where $\hat \theta$ is for example the MLE? The answers are in general both negative \citep{clarke2024cheat}, accompanied by real-data evidence that sometimes Bayesian methods instead produce worse predictions \citep{wenzel2020good} or overconfident inference \citep{yang2018bayesian}.
The only predictive optimality we may assert is that, \emph{if the likelihood and prior are both correct}, then among all probabilistic forecasts, the  Bayesian posterior prediction minimizes the posterior-averaged KL divergence from the sampling distribution  \citep{aitchison1975goodness}, i.e., 
\begin{equation}\label{eq:Bayes_opt}\hspace*{-1em}
\arg \min_{Q(\cdot)}  \int\!  \mathrm{KL}\left(p(\tilde y|\theta) || Q(\tilde y) \right) p(\theta \vert y) d \theta 
= p^{\mathrm{Bayes}}(\tilde y|y).    
\end{equation}
But this model-being-correct assumption is unlikely to be relevant to any realistic application. Even if the model is correct, this optimality  statement typically does not hold for other divergences practitioners may care about. 

To make Bayesian inference under potential model misspecification,  this  paper  develops \emph{predictive variational inference} (PVI): a general framework that constructs predictively optimal posterior samples $q(\theta|y)$.
Unlike classical VI, we optimize a divergence ${D}$ between the \emph{posterior predictive distribution} and the data generating process $p_{\mathrm{true}}(\tilde y)$:
\begin{equation}
  \min_{q\in \mathcal{F}} {D}  \left( \int_\Theta p(\tilde y |\theta)  q(\theta)  d\theta ~\Big\Vert ~ p_{\mathrm{true}}(\tilde y)\right).  
  \label{eq:pvi1}
\end{equation}
PVI incorporates various scoring rules, resulting in a family of PVI algorithms each having a statistically meaningful divergence in the objective \eqref{eq:pvi1}. We typically use a flexible variational family $\mathcal{F}$ such as a normalizing flow~\citep{papamakarios2021normalizing}.
Rather than viewing this as a sub-optimal approximation to the exact Bayes, PVI directly \emph{learns the optimal predictive distribution}  directly. We find three advantages: (a) 
The PVI-learned optimal predictive is generally different from exact Bayes, even with an infinite amount of data or a flexible enough variational family, while its resulting predictive distribution better fits the observed data. We interpret PVI as an implicit hierarchical expansion. (b) We use PVI as a diagnostic tool. The discrepancy between PVI and exact Bayes is no longer an approximation error; rather, it reveals the \emph{model misspecification}. We use this heuristic to tell which parameter should vary in the population to improve the model. (c) PVI applies to both likelihood-exact and likelihood-free settings. Via a simulation-based divergence $\mathcal{D}$, PVI leads to a computationally efficient tool for simulation-based inference (SBI) tasks. Figure \ref{fig:cryoem-point} gives a preview of the practical benefits of PVI: We analyze cryogenic electron microscopy (cryoEM) images using a likelihood-free model. The parameter of interest is the bond angle of a molecule, which varies across the population due to molecular heterogeneity. While the exact Bayes posterior quickly collapses to an overconfident point mass, PVI accurately recovers the true population distribution.

\begin{figure}
    \centering
    \includegraphics[width=0.235\linewidth]{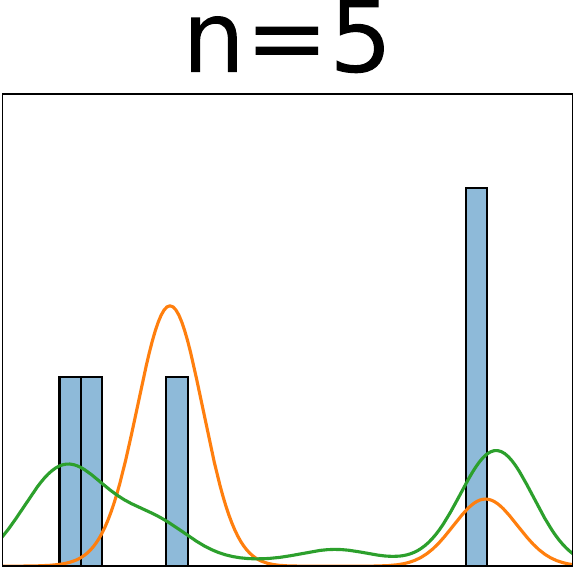}
        \includegraphics[width=0.235\linewidth]{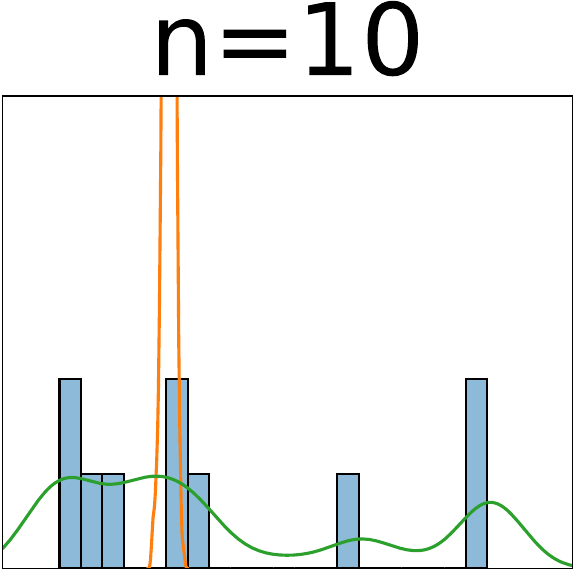}
    \includegraphics[width=0.46\linewidth]{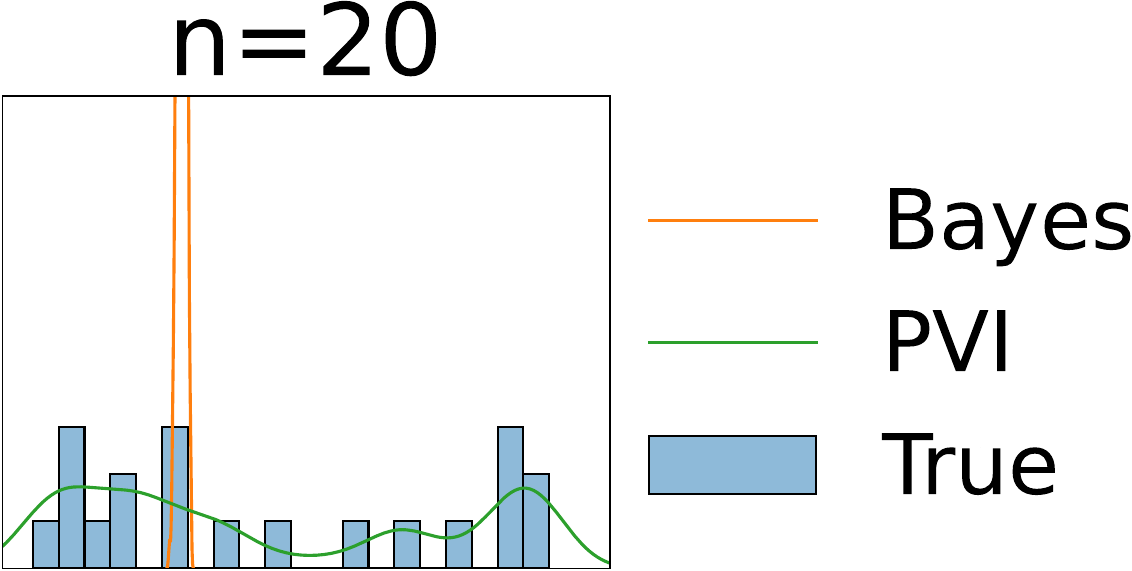}
    \caption{Inference for the molecule angle in a heterogeneous population using cryoEM images. For details, see Section~\ref{sec:cryoem}.}
    \label{fig:cryoem-point}
\end{figure}

\subsection{Relationship of PVI to VI}

Variational inference \citep[VI, ][]{jordan1999introduction, blei2017variational} is the workhorse of approximate Bayesian posterior inference for large datasets. Given a sequence of observations $y= (y_1, \dots, y_n), ~y_i\in \Omega$, and parameter vectors $\theta \in \Theta$, instead of sampling from the exact Bayesian posterior density $p(\theta|y)$, VI usually minimizes the Kullback–Leibler (KL) divergence between an approximate inference $q(\theta)$ and the target posterior density $p(\theta|y)$, among a tractable variational family $q\in \mathcal{F}$,
\begin{math}	
  \min_{q\in \mathcal{F}} \mathrm{KL}  \left( q(\theta) ~\Big\Vert~ p(\theta|y)\right), 
\end{math}
which is equivalent to maximizing the evidence lower bound (ELBO), the expected log likelihood plus a prior regularization:  
\begin{math}
  \text{ELBO} = 
  \int_\Theta \sum_{i=1}^n  \log p(y_i|\theta) q(\theta) d\theta - \mathrm{KL}(q(\theta) || p^{\mathrm{prior}}(\theta)).
\end{math}

VI typically underestimates posterior uncertainty, making VI an inferior approximation to the gold standard Markov chain Monte Carlo (MCMC). On the other hand, the uncertainty reflected in the exact Bayes is not always  desirable, as Bayesian inference is conditional on the belief model which is nearly always wrong. In particular, when the sample size $n$ is big enough, the Bernstein–von Mises theorem ensures that both VI and the exact posterior concentrate to a point mass around the maximum likelihood estimation (MLE). This vanishing posterior uncertainty compromises the flexibility of probabilistic inference: If eventually Bayesian inference is as good as a point estimate even when the model is wrong, then what is the point of investing additional effort into the more challenging task of probabilistic inference?

\begin{figure*}[t]
\centering
\begin{minipage}[t]{\columnwidth}
\begin{algorithm}[H]
\footnotesize
\caption{Variational Inference}
\label{alg:vi}
\begin{algorithmic}[1]
\REQUIRE Initial parameter $\phi$, model $p(y|\theta)$, prior  $p^{\mathrm{prior}}(\theta)$
\WHILE{not converge}
\STATE
\parbox[t]{\dimexpr\linewidth-\algorithmicindent}{$\mathcal{L}_\mathrm{VI}(\phi) \gets$\\
\hspace*{0.5em}$\int_\Theta \sum_{i=1}^n  \log p(y_i|\theta) q_\phi(\theta) d\theta - \mathrm{KL}(q_\phi(\theta) || p^{\mathrm{prior}}(\theta))$
}
    \STATE $\phi\gets \phi+\eta \nabla_\phi \mathcal{L}_\mathrm{VI}(\phi)$
\ENDWHILE
\end{algorithmic}
\end{algorithm}
\end{minipage}
\hfill
\begin{minipage}[t]{\columnwidth}
\begin{algorithm}[H]
\footnotesize
\caption{Predictive Variational Inference}
\label{alg:pvi}
\begin{algorithmic}[1]
\REQUIRE Initial parameter $\phi$, model $p(y|\theta)$, regularizer $r(\phi)$
\WHILE{not converge}
\STATE
\parbox[t]{\dimexpr\linewidth-\algorithmicindent}{$\mathcal{L}_\mathrm{PVI}(\phi) \gets$\\
\hspace*{0.5em}$\sum_{i=1}^n  S\left(\int_\Theta p(\cdot|\theta) q_{\phi}(\theta) d\theta, y_i\right) - \lambda r(\phi)$
}
    \STATE $\phi\gets \phi+\eta \nabla_\phi \mathcal{L}_\mathrm{PVI}(\phi)$
\ENDWHILE
\end{algorithmic}
\end{algorithm}
\end{minipage}

\end{figure*}

\subsection{Our Contributions and Related Work}

Addressing model misspecification has long been an important direction in Bayesian statistics \citep{berger1994overview, walker2013bayesian, gelman2020holes}, and it is beyond the scope of this review to cover all related approaches. When dealing with discrete models, it is well-known that ensemble methods like bagging \citep{domingos1997does} and stacking \citep{le2017bayes, yao2018using, yao2024simulation} outperform full Bayesian model averaging under misspecification. Conceptually, the present paper shares the same spirit of predictive model averaging and can be viewed as its continuous generalization: instead of a discrete weight on a finite set of models, we find the predictively-optimal density in a continuous parameter space. \citet{wang2019comment, wang2019variational} build the connection between variational methods and empirical Bayes. \citet{louppe2019adversarial} use adversarial learning to minimize the divergence between the empirical distribution and the marginal distribution of data in SBI settings. \citet{vandegar2021neural} approximate the likelihood of those simulators and optimize a neural empirical Bayes prior following the likelihood function. One interesting perspective is that the log score PVI performs neural empirical Bayes by minimizing the prior predictive loss. More related to our method, \citet{masegosa2020learning} develops a PAC-Bayes bound and \citet{morningstar2022pacm} establish PAC$^{m}$-Bayes. From a slightly different motivation to minimize the empirical risk gap and ensure the PAC generalization bound, their proposed PAC$^{m}$-Bayes solution can be viewed as an instance of our PVI with the special choice of log score, prior regularization, and one version of gradient evaluation. Another line of research focuses on direct loss minimization with expected log predictive density \citep{sheth2017excess,sheth2020pseudo,jankowiak2020parametric} for Gaussian processes. Our work provides a practical inference pipeline to directly optimize the log predictive density and many other scoring rules for an arbitrary model. Concurrent with our work, \citet{shen2024prediction} derive maximum mean discrepancy (MMD)-based~\citep{gretton2012kernel} predictive inference, which corresponds to a similar predictive objective with MMD as the scoring rule. \citet{mclatchie2025predictively} further develop related pseudo-posteriors for general scoring rules and study their theoretical properties. These works mainly use gradient-flow-based implementations. In contrast, we focus on tractable parametric variational implementations and provide a general recipe applicable to a wide range of proper scoring rules. One advantage of the parametric VI family is that it permits a spectrum of approximations: normalizing-flow PVI handles moderate-dimensional regressions well, while simpler families (e.g., mean-field Gaussian) further improve scalability in higher dimensions. 

This work makes four primary contributions. First, we allow for a general scoring rule $S(\cdot, \cdot)$, which allows predictive optimality to be tuned to the scientific task at hand; in addition to the logarithmic score, we provide algorithms that produce calibrated predictive uncertainty (the interval score and CRPS) and the Brier score; the ability to handle divergences other than the logarithmic score is crucial, as CRPS can be implemented in a likelihood-free manner and therefore is amenable to likelihood-free simulation-based inference. Second, we derive gradients estimates for each divergence, including a novel unbiased gradient estimate in the logarithmic score using rejection sampling. Third, we allow for the PVI objective to be regularized either in the direction of the prior distribution or the posterior distribution, allowing us to smoothly interpolate between optimal predictive inference and fully-Bayesian inference. Last, we show how PVI can be used as a diagnostic tool and can motivate model expansion.

\section{Predictive variational inference}\label{sec:pvi}

\subsection{Proper scoring rule of predictions}

We evaluate probabilistic models by how the resulting posterior prediction fits the observed data. We measure such goodness-of-fit by scoring rules \citep[for a complete review, see e.g., ][]{gneiting2007strictly}. We assume IID observations $(y_1, \dots, y_n)$ on a sample space $\Omega$ from an unknown true data generating process $p_\mathrm{true}(y), ~y\in\Omega$. Let $P$ be a probabilistic forecast for the next unseen data $ \tilde y$. A scoring rule is an extended real-valued function $S(P, \tilde y) \in \overline{\R}$. The expected score measures the predictive performance of $P$ averaged over the true data generating process $\E [S (P, \tilde y)] \coloneqq \int_{\Omega} S (P, \tilde y)dp_{\true}(\tilde y)$, denoted by $S(P,p_{\true})$, and is often approximated by the empirical score $S (P, p_{\true}) \approx \frac{1}{n}\sum_{i=1}^nS(P,y_i)$. A scoring rule is said to be proper if the maximum expected score is achieved under the truth, i.e., $S (P, p_{\true}) \leq S (p_\mathrm{true}, p_{\true})$. A proper scoring rule corresponds to a divergence function in the $\Omega$ space between the forecast $P$ and the true data generating process $p_\mathrm{true}$, $D (P, p_\mathrm{true}) = S (p_\mathrm{true}, p_{\true}) - S (P, p_{\true})$. The most common score is the log score, $S (P, \tilde y) = \log P(\tilde y)$, which leads to the familiar KL divergence. Different scoring rules evaluate different aspects of the prediction, such as the predictive density, quantiles and coverage.

\subsection{Optimizing the predictive performance}

Consider the following inference task: We observe data $y= (y_1, \dots, y_n), ~y_i\in \Omega$, and assume $\{y_i\}$ are conditionally IID given covariates. Omitting covariates for brevity, we denote $p_{\true}(y_i)$ as the data generating process. We are given a (potentially wrong)  model with a parameter vector $\theta \in \Theta$, a (potentially intractable) likelihood $p(y_i|\theta)$, and a prior $p^{\mathrm{prior}}(\theta)$. Let $p(\theta|y) \propto p^{\mathrm{prior}}(\theta)\prod_{i=1}^np(y_i|\theta) $ be the exact Bayesian posterior density. Our inferential goal is a variational approximation $q_{\phi}(\theta)$, parameterized by a variational parameter $\phi\in\Phi$. Averaged over the inference $q_{\phi}$, the posterior predictive distribution of the next unseen data  $\tilde y$ will be $q_{\phi}^Y(\tilde y) \coloneqq \int q_{\phi}(\theta) p(\tilde y | \theta) d\theta.$ So far the setting is the same as standard VI. However, rather than approximating exact Bayes, we first pick a proper scoring rule $S$ on the outcome predictions, and then optimize $\phi$ such that the empirical scoring rule of the posterior predictive distribution is maximized. That is,
\begin{equation}\label{eq_vi_obj}
  \max_{\phi\in\Phi} \left(\sum_{i=1}^n  S\left(\int_\Theta p(\cdot|\theta) q_{\phi}(\theta) d\theta, y_i\right) - \lambda r(\phi)\right),
\end{equation}
where $r(\phi)$ is an optional regularizer. If we choose $S$ to be the log score, PVI maximizes the usual log predictive density
\begin{math}
  \max_{\phi\in\Phi}\sum_{i=1}^n \log \int_\Theta p(y_i|\theta) q_{\phi}(\theta) d\theta.
\end{math}
Equivalently, as in \eqref{eq:pvi1}, we can describe PVI as minimizing the divergence between the posterior predictive distribution, $q_{\phi}^Y(\tilde y)$, and the true data generating process of the next unseen observation $p_{\true}(\tilde y)$. Let $D$ be the corresponding divergence of the scoring rule $S$. Then up to a constant $C$, the entropy of $p_{\true}(\tilde y)$, the first summation in the PVI objective \eqref{eq_vi_obj} is the negative empirical version of this outcome-space divergence: $\lim_{n\to \infty} \sum_{i=1}^n S\left(q_{\phi}^Y(\cdot), y_i\right)/n - C = -D (q_{\phi}^Y(\tilde y) ~||~ p_{\true}(\tilde y) ).$ For example, the log score PVI minimizes the KL divergence to $p_{\true}(\tilde y)$. We compare the procedure of VI and PVI in Algorithms \ref{alg:vi} and \ref{alg:pvi}.

\subsection{Regularization}

We include a tunable regularization term $-\lambda r(\phi)$ in the PVI objective \eqref{eq_vi_obj}. There are two reasons for this. First, compared to empirical risk minimization, it is a salient feature of Bayes to incorporate prior knowledge, which we will keep. Second, regularization improves the identifiability of $\phi$. The optimum is not necessarily unique in the PVI optimization of \eqref{eq_vi_obj}, and adding a regularizer stabilizes the location of the final solution. We consider two regularizers: regularizing to the prior, or to the posterior.
\begin{align}
    r^{\mathrm{prior}}(\phi)&=\mathrm{KL}(q_{\phi}(\theta) ~||~ p^{\mathrm{prior}}(\theta)). \label{eq:prior_reg}\\
    r^{\mathrm{post}}(\phi)&=\mathrm{KL}(q_{\phi}(\theta) ~||~ p(\theta|y)).\label{eq:post_reg}
\end{align}
Regularization toward the prior \eqref{eq:prior_reg} is familiar, as it appears in both standard VI and other generalized VI methods (see Section~\ref{sec:related}). The posterior-oriented regularization \eqref{eq:post_reg} is unique to PVI, and it is effectively the ELBO in standard VI. With this term, PVI can continuously interpolate between the pure prediction-optimization ($\lambda = 0$) and pure Bayes ($\lambda = \infty$), and thereby benefit from both finite-sample robustness and large-sample optimality guarantees. In most experiments, we keep $\lambda$ as a small fixed constant, though in principle it can also be tuned using cross-validation. See Supplement \ref{sec:interpolation} for more demonstrations of the effect of the regularization.

\subsection{Asymptotics and hierarchical Bayes}\label{sec:Asymptotic}
 
Before we demonstrate the practical implementation of PVI \eqref{eq_vi_obj} and various choices of $S$, we prove some appealing theoretical properties of the PVI optima. With enough samples and under certain assumptions, one would expect that the posterior prediction of the PVI solution converges to the best possible probabilistic forecast:

\begin{proposition}[informal]\label{par_consistency}
Let the variational distribution $q_\phi(\theta)$ be parameterized by $\phi\in\Phi$. With any strictly proper score function $S$, an unknown true data generating process distribution $p_{\true}(y)$, a likelihood model $p(y|\theta)$, and a size-$n$ sample $y_1,y_2,...,y_n\sim p_{\true}(\cdot)$, let $\phi_n$ be the solution of the PVI objective \eqref{eq_vi_obj} with a continuous prior regularization on $\phi$. The predictive optimal variational parameter  $\phi_0$ is defined as
\begin{align}
    \phi_0\coloneqq\argmax_{\phi\in\Phi}S\left(\int_\Theta p(\cdot|\theta)q_\phi(\theta)d\theta, ~p_{\true}(\cdot)\right).\notag
\end{align}
Define $\ell(y, \phi) = S(\int p(\cdot | \theta) \, q_\phi(\theta) \, d\theta,y)$. If $\phi_0$ is unique, then under regularity conditions, we have (1) consistency: $\phi_n\overset{p}{\to}\phi_0$ as $n\to\infty$; (2) asymptotic normality: $ \sqrt{n} (\phi_n - \phi_0) \xrightarrow{d} \MVN\left( 0, \, V \right)$ where $V$ is a non-singular matrix. 
\end{proposition}

The detailed proposition and proof are available in Supplement \ref{sec:proof_consist}. This consistency entails two corollaries.

\begin{corollary}\label{corollary1}
    If the model  is well-specified, i.e., there exists a $\theta_0$ and $\phi^*$ such that $p(y|\theta_0)=p_{\true}(y)$ and $q_{\phi^*}(\theta)=\delta_{\theta_0}(\theta)$, then under regularity conditions, $\phi_n\overset{p}{\to}\phi^*$.
\end{corollary}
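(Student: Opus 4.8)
The plan is to deduce this corollary directly from Proposition~\ref{par_consistency} by identifying the predictive-optimal parameter $\phi_0$ with the well-specified parameter $\phi^*$, after which the convergence $\phi_n \overset{p}{\to} \phi^*$ is immediate. The first step is the key computation: under the stated well-specification the posterior predictive induced by $\phi^*$ coincides exactly with the truth. Plugging $q_{\phi^*}(\theta) = \delta_{\theta_0}(\theta)$ into the marginalization gives
\[
\int_\Theta p(\tilde y|\theta)\, q_{\phi^*}(\theta)\, d\theta = p(\tilde y|\theta_0) = p_{\true}(\tilde y),
\]
so the forecast associated with $\phi^*$ is $p_{\true}$ itself.

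Next I would invoke strict propriety of $S$. By definition a strictly proper score obeys $S(P, p_{\true}) \le S(p_{\true}, p_{\true})$ with equality if and only if $P = p_{\true}$. Since the $\phi^*$-forecast equals $p_{\true}$, the expected score $S\!\left(\int p(\cdot|\theta) q_\phi(\theta) d\theta,\, p_{\true}\right)$ attains its global maximum at $\phi = \phi^*$, so $\phi^*$ lies in the argmax set defining $\phi_0$. The uniqueness of $\phi_0$, which is part of the hypotheses of Proposition~\ref{par_consistency}, then forces $\phi_0 = \phi^*$.

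Finally I would apply Proposition~\ref{par_consistency} verbatim: under its conditions (compact $\Phi$, the integrable-envelope domination bound, continuity of the score in $\phi$, and uniqueness of $\phi_0$) the PVI solution satisfies $\phi_n \overset{p}{\to} \phi_0$, and combining with $\phi_0 = \phi^*$ yields the claim. No separate handling of the regularizer is needed, because Proposition~\ref{par_consistency} already absorbs the continuous prior regularization: once normalized by $n$, the $O(1)$ penalty is asymptotically negligible against the $O(n)$ empirical score, and the limiting objective is exactly the unregularized expected score whose maximizer is $\phi_0$.

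The only genuinely delicate point is the identification $\phi_0 = \phi^*$. Strict propriety pins down the optimal forecast uniquely only in the space of predictive distributions, not in the variational parameter: in principle several $\phi$ could map to the same posterior predictive, or a non-degenerate $q_\phi$ could reproduce $p_{\true}$ after marginalization, in which case $\phi^*$ need not be the sole maximizer. This is precisely why uniqueness of $\phi_0$ must be imposed as a regularity condition (for instance through the regularizer together with an identifiability assumption on the map $\phi \mapsto q_\phi^Y$). I would therefore state explicitly that the ``regularity conditions'' of the corollary include whatever is needed to guarantee a unique $\phi_0$, so that the identification step, and hence the whole argument, goes through.
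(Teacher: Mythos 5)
Your proposal is correct and matches the paper's intended argument: the paper offers no separate proof of this corollary, presenting it as an immediate consequence of Proposition~\ref{par_consistency}, with exactly your identification step (well-specification makes the $\phi^*$-forecast equal $p_{\true}$, strict propriety puts $\phi^*$ in the argmax, and the assumed uniqueness of $\phi_0$ forces $\phi_0=\phi^*$). Your closing caveat about uniqueness being a genuine extra assumption is also precisely the caveat the authors acknowledge in their footnote on the heterogeneity-detection heuristic.
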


\begin{corollary}\label{corollary2}
    If the model is misspecified but there exists a $\phi^*$ such  that $p_{\true}(y)=\int p(y|\theta)q_{\phi^*}(\theta)d\theta$, then under regularity conditions,  $\phi_n\overset{p}{\to}\phi^*$.
\end{corollary}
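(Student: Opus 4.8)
The plan is to derive Corollary \ref{corollary2} as an immediate consequence of Proposition \ref{par_consistency}, whose conclusion $\phi_n \overset{p}{\to} \phi_0$ I take as given along with its regularity conditions (compactness of $\Phi$, the integrable envelope, continuity in $\phi$, and the uniqueness of the population optimum $\phi_0$). Since the proposition already hands us convergence to $\phi_0 = \argmax_{\phi\in\Phi} S(\int_\Theta p(\cdot|\theta) q_\phi(\theta) d\theta, p_{\true})$, the entire task reduces to verifying that the misspecification hypothesis forces $\phi_0 = \phi^*$. In other words, I do not re-establish any limit theorem; I only identify the target of the convergence.

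The identification rests on strict propriety. By hypothesis the posterior predictive at $\phi^*$ recovers the truth, $q_{\phi^*}^Y(\cdot) = \int_\Theta p(\cdot|\theta) q_{\phi^*}(\theta) d\theta = p_{\true}(\cdot)$. Because $S$ is strictly proper, the induced divergence $D(P, p_{\true}) = S(p_{\true}, p_{\true}) - S(P, p_{\true})$ is nonnegative and equals zero only when $P = p_{\true}$. Evaluating at $P = q_{\phi^*}^Y = p_{\true}$ gives $S(q_{\phi^*}^Y, p_{\true}) = S(p_{\true}, p_{\true})$, the largest value the expected score can take over any forecast. Hence $\phi^*$ attains the global maximum of the very map $\phi \mapsto S(q_\phi^Y, p_{\true})$ whose argmax defines $\phi_0$, so $\phi^*$ is a population maximizer.

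The one genuine subtlety is promoting ``$\phi^*$ is a maximizer'' to ``$\phi^* = \phi_0$''. Strict propriety constrains only the predictive distribution $q_\phi^Y$, not the parameter $\phi$: the map $\phi \mapsto q_\phi^Y$ need not be injective, so several parameters could in principle reproduce $p_{\true}$, the non-identifiability that the regularizer in \eqref{eq_vi_obj} is meant to address. I resolve this by invoking the uniqueness of $\phi_0$ assumed in Proposition \ref{par_consistency}: the population score attains its maximum at the single point $\phi_0$, and the previous paragraph shows $\phi^*$ also attains that maximum, whence $\phi^* = \phi_0$ (and in particular the $\phi^*$ of the hypothesis is itself unique). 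Substituting into the proposition's conclusion yields $\phi_n \overset{p}{\to} \phi_0 = \phi^*$, proving the corollary. The main obstacle is thus conceptual rather than computational: the convergence is inherited wholesale, and the work lies entirely in separating uniqueness of the optimal \emph{forecast} (delivered by strict propriety) from uniqueness of the optimal \emph{parameter} (supplied by the proposition's hypotheses).
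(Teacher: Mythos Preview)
Your argument is correct and matches the paper's approach: the paper presents Corollary~\ref{corollary2} as an immediate consequence of Proposition~\ref{par_consistency} (``This consistency entails two corollaries'') without a separate proof, and your derivation---identifying $\phi^*$ with the unique maximizer $\phi_0$ via strict propriety and then invoking the proposition---is exactly the intended reasoning. Your explicit treatment of the distinction between uniqueness of the optimal forecast and uniqueness of the optimal parameter is a welcome clarification that the paper leaves implicit.
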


The corollaries establish the backward compatibility of PVI when the variational family is flexible: If the model is well-specified, PVI concentrates on the true parameter, the same as regular Bayesian methods. However, if the model contains a parameter that varies in population, while the regular Bayes still concentrates, PVI asymptotically converges to the true population distribution of the parameter, not a point mass. We use a toy example to illustrate this difference.

\begin{figure}
    \centering
    \includegraphics[width=0.45\linewidth]{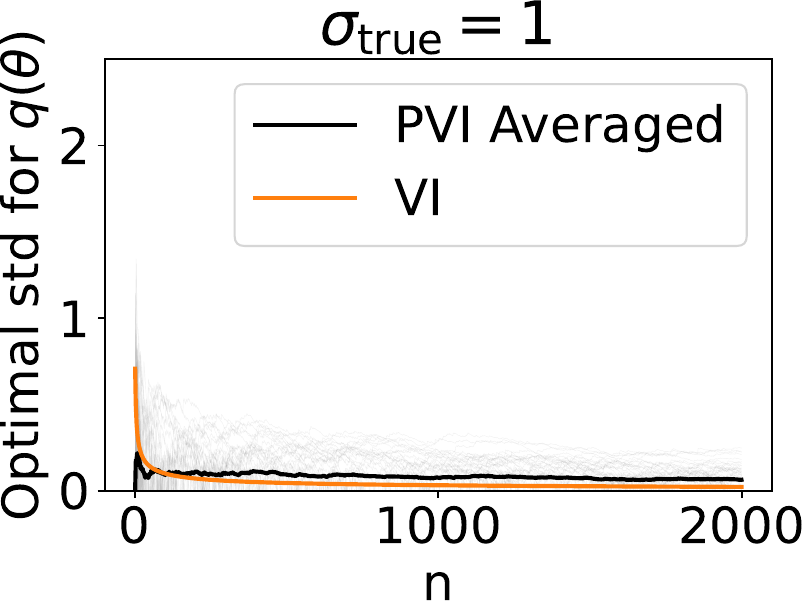}
        \includegraphics[width=0.45\linewidth]{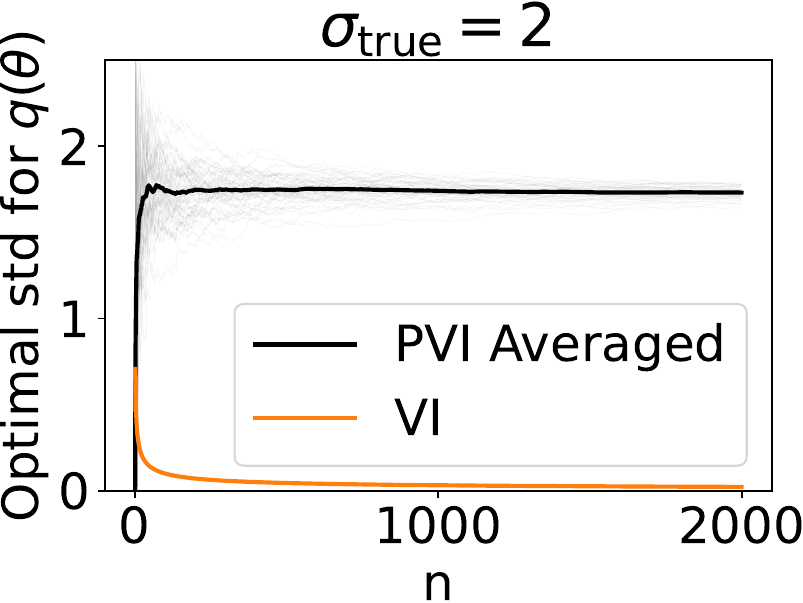}
        \caption{Optimal standard deviation of $q(\theta)$ with PVI or VI for the simple normal case from 50 simulations, with $\sigma_{\text{true}} = 1$ on the left and $\sigma_{\text{true}} = 2$ on the right.}
    \label{fig:normal}
\end{figure}

\paragraph{A normal example.}
We model an IID dataset $\{y_i\}_{i=1}^n$ by a model $y|\theta\sim \text{normal}(\theta, 1)$ where the only parameter is the location $\theta$. The unknown true data generating procedure is $y_i\sim \text{normal}(0,\sigma_{\text{true}})$. The exact posterior is $p(\theta|y)= \mathrm{normal}( \theta ~| ~\bar y,1/\sqrt{n})$. We use PVI with a Gaussian variational family. (a) When $\sigma_{\text{true}} = 1$, the model is well-specified, and PVI solution $q(\theta)$ converges to the point mass at 0, same as the classical VI and Bayes. (b) When $\sigma_{\text{true}}=2$, while the regular Bayes still concentrates to the point mass at 0, the PVI posterior converges to $\mathrm{normal}(0,\sqrt{3})$, and stays there as $n$ goes to infinity regardless of the scoring rule, so that the posterior prediction matches the true data process despite the model being wrong. In general, unlike standard Bayesian inference, when the model is misspecified the PVI posterior uncertainty may not go to zero as the sample size grows to infinity, a reflection of the ``epistemic humility''. See Figure~\ref{fig:normal} for simulations.

\paragraph{The uniqueness assumption.} Note in Proposition \ref{par_consistency}, we have an assumption that $\phi_0$ is unique. However, PVI is typically overparametrized:
the PVI optimal solution may not be unique. Even in a normal model, $y
\sim \mathrm{normal} (\mu, \sigma)$, when both $\mu$ and $\sigma$ are
allowed to be nondegenerate random variables, it is equivalent to
distribute uncertainty either on the variance of $\mu$, or on the scale
of $\sigma^2$. We are not worried about overparameterization for three reasons. (1) In practice, we propose using regularization to break the ties, such that numerical optimization is often not an issue.
(2) In our view, this overparameterization allows PVI to
adjust for model misspecification to achieve better predictive performance.
(3) Since we are using normalizing flows, even when the PVI solution is
unique, the flow parameter will not be unique because a normalizing flow
itself is typically not identifiable. Nevertheless, the
overparameterization should not prevent PVI
from being useful. We include the uniqueness assumption in Proposition \ref{par_consistency} for theoretical
interest. There is one special case when the uniqueness is typically
satisfied--when the parametric model is `correct' and the PVI optimum is
a point delta function. Then our uniqueness assumption becomes the usual
identifiable assumption of the original model, which is typically met.
The main theoretical interest of Proposition \ref{par_consistency} is to establish the
consistency and convergence rate of PVI, such that it is `backward
compatible' with the regular Bayes when the model is correct.

\subsection{Heterogeneity detection and implicit hierarchical Bayes}

As implied by Corollary \ref{corollary1}, a non-vanishing PVI posterior variance indicates model misspecification, which we can use as a diagnostic check\footnote{As a caveat, the uniqueness assumption in Proposition \ref{par_consistency} may not always hold.}. 
In addition to a binary correct-or-wrong model check, monitoring the PVI posterior variance further indicates whether the parameters need to vary in population. With large enough $n$, if the variance of a variable $\Var(\theta_i)>\alpha$, where $\alpha$ is a chosen threshold, we may need to expand the model.
We call this check ``heterogeneity detection''. We demonstrate how to use PVI to guide model expansion using an election example in Section \ref{sec:voting}. Another possible indicator of heterogeneity is the difference between the forecast scores with PVI and VI posteriors.

The parameter-may-vary-in-the-population perspective is the key motivation behind Bayesian hierarchical modeling \citep{gelman2006multilevel}. Given any IID observational model $y \sim p(y|\theta)$, we distinguish between three views of inference:
\begin{enumerate} 
  \item \emph{Complete-pooling} view: We perform standard Bayesian inference $\theta |y$. It is straightforward but more prone to model misspecification.

  \item \emph{Full-hierarchical} view: Any parameter is assumed to vary in the population and we aim for a fully individualized inference. That is, we expand the model as $y_i\sim p(y|\theta_i)$, and infer $\theta |y$ where $\theta = (\theta_1, \ldots, \theta_n)$. This hierarchical expansion makes the model more robust \citep{wang2018general}, with the cost of augmenting each parameter $\theta$ by $n$ copies $\theta_1, \dots, \theta_n$, which quickly becomes computationally prohibitive when $n$ is big. Moreover, as each parameter $\theta_i$ only links to one data point $y_i$, this individualized inference is typically noisy and relies on the hyper-prior, which itself is a parametric family and may be misspecified.

  \item \emph{Implicit-hierarchical} view: We  implicitly adopt the view of $y_i\sim p(y_i|\theta_i)$, but we marginalize out individual $\theta_i$ and only infer their population distribution  $\theta_1, \theta_2, \dots, \theta_n \sim \pi_{\mathrm{pop}}(\theta)$. In many cases, the primary inference interest lies in this population distribution, such as understanding population-level properties or making predictions for new data points.
\end{enumerate}
PVI is an implicit hierarchical expansion. Under the condition of Corollary~\ref{corollary2}, $q_{\phi}(\theta)$ converges to $\pi_{\mathrm{pop}}(\theta)$, the above-mentioned population distribution of parameters, offering an automated and computationally fast way to perform hierarchical expansion for any model, and adapting to any scoring rule.

\section{Implementation and gradient evaluation}\label{sec:gradient}

PVI is a natural idea, but its  optimization is nontrivial. This section provides practical stochastic gradient algorithms to implement PVI \eqref{eq_vi_obj} on four concrete scoring rules $S$: the logarithmic, quadratic, interval and continuous ranked probability score.

The PVI objective \eqref{eq_vi_obj} contains two terms. The  regularization term is the usual ELBO (posterior regularizer) or  part of the ELBO (prior regularizer), hence its gradient with respect to the variational parameter $\phi$ is straightforward. The first term in \eqref{eq_vi_obj}
is a summation: $R(\phi)\coloneqq  \sum_{i=1}^n  S\left(\int p(\cdot|\theta) q_{\phi}(\theta) d\theta, y_i\right). 
$
In stochastic optimization, in each iteration we sample a mini-batch of $\theta_1, \dots, \theta_M$ draws from the variational density $q_\phi(\theta)$. To iteratively update $\phi$, we need to evaluate the gradient  $\frac{d }{d \phi }R(\phi)$. 
As an overview, we summarize the evaluation of 
$R(\phi)$ and the gradient $\frac{d }{d \phi }R(\phi)$ in Table~\ref{tab:scores}. We defer theoretical claims and algorithm details to the Supplement \ref{sec:all_proofs}.
\begin{table*}
\centering\scriptsize
    \begin{tabular}{lll}
    \toprule
         Rule& Objective to maximize& Gradient estimator \\
         \midrule
         \multirow{3}*{Logarithmic score}& \multirow{3}*{$\sum_{i=1}^n \log \int_\Theta p(y_i|\theta) q_{\phi}(\theta) d\theta$}&$\boxed{a}\sum_{i=1}^n \left( \sum_{j=1}^M \frac{d }{d \phi}p(y_i|T_{\phi}(u_j)) \right)/\left(\sum_{j=1}^M p(y_i|T_{\phi}(u_j))\right)$
\\
&&
$\boxed{b}\sum_{i=1}^n  \frac{\sum_{j=1}^M \left(\mathbbm{1} (t_j  < p(y_i|\theta_j)/C )  \frac{\partial }{\partial \phi} \log   q_\phi (\theta_j ) \right) }{\sum_{j=1}^M \mathbbm{1} (t_j  < p(y_i|\theta_j)/ C )}$\\
        \midrule
        Quadratic score&$\begin{aligned}&\sum_{i=1}^n 2\int_\Theta f(\theta,y_i)q_{\phi}(\theta)d\theta\\&-\sum_{j=1}^I\left(\int_\Theta f(\theta,j)q_{\phi}(\theta)d\theta\right)^2\end{aligned}$&$ \begin{aligned}&2\sum_{i=1}^n\frac{1}{M}\sum_{j=1}^M \frac{d}{d\phi}f(T_{\phi}(u_j),y_i)\\&-\sum_{j=1}^I\left( \frac{1}{M}\sum_{k=1}^M f(T_{\phi}(u_k),j)\right)\left(\frac{1}{M}\sum_{k=1}^M \frac{d f(T_{\phi}(u_k),j)}{d\phi}\right)\end{aligned}$\\ 
            \midrule
   Interval score & $\begin{aligned}-&\sum_{i=1}^n(U_{\alpha}-L_{\alpha})+\frac{2}{\alpha}(L_{\alpha}-y_i)\mathbb{I}(y_i<L_{\alpha})\\+&\frac{2}{\alpha}(y_i-U_{\alpha})\mathbb{I}(y_i>U_{\alpha})\end{aligned}$&$\begin{aligned}\sum_{i=1}^n&\frac{\partial \hat{U}_{\alpha}}{\partial \phi}\left(\frac{2}{\alpha}\mathbb{I}(y_i>\hat{U}_{\alpha})-1\right)-\frac{\partial \hat{L}_{\alpha}}{\partial \phi}\left(\frac{2}{\alpha}\mathbb{I}(y_i<\hat{L}_{\alpha})-1\right)\end{aligned}$\\
        \midrule
         CRPS&$
         \begin{aligned}-&\sum_{i=1}^n\E_{y^{\mathrm{sim}}\sim\int_\Theta p(y|\theta)q_{\phi}(\theta)d\theta}  \left[  |y^{\mathrm{sim}}-y_i|\right]\\ +& \frac{1}{2}\E_{y^{\mathrm{sim}}_1, y^{\mathrm{sim}}_2\sim\int_\Theta p(y|\theta)q_{\phi}(\theta)d\theta}\left[  |y^{\mathrm{sim}}_1- y^{\mathrm{sim}}_2|  \right]
         \end{aligned}$&
         $\begin{aligned}
             -&\frac{1}{2M} \sum_{i=1}^n\sum_{m=1}^{2M} \left( g_m \cdot \mathrm{sign}( y^{\mathrm{sim}}_m-y_i)\right)
             \\+&\frac{1}{2M} \sum_{m=1}^M  \left((g_m- g_{m+M})  \cdot  \mathrm{sign}( y^{\mathrm{sim}}_m-  y^{\mathrm{sim}}_{m+M})\right)
         \end{aligned}$
   \\
         \bottomrule
    \end{tabular}
    \caption{How to compute the objectives and their gradients for four scoring rules. Here, $\theta=T_{\phi}(u),u\sim q(\cdot)$ is a reparameterization such that $\theta \sim q_\phi(\cdot)$.
      We generate $2M$ independent observations $y^{\mathrm{sim}}_1,\ldots,y^{\mathrm{sim}}_{2M}$ for CRPS and denote $g_m=\frac{d}{d\phi}y^{\mathrm{sim}}_m$ for $m=1,\ldots,2M$. We estimate sample quantiles $\hat{L}_\alpha$ and $\hat{U}_\alpha$ for the interval score from simulated observations.}
    \label{tab:scores}
\end{table*}

\textbf{The logarithmic score with unbiased gradient} $S(Q, y) = \log Q(y)$ is the default score in Bayesian inference. The log-score PVI equipped with the prior regularization and the asymptotically unbiased gradient (a) coincides with the PAC$^m$-Bayes \citep{morningstar2022pacm}. Because of the exchange in the order of the integral and log, the objective $R(\phi)$ contains the logarithm of a stochastic summation, making its gradient evaluation nontrivial. Table~\ref{tab:scores} shows two gradient estimates $\frac{d }{d \phi }R(\phi)$. The first estimate approximates the integral by reparameterization, and we show it is asymptotically unbiased when the batch size $M$ is big. With finite batch size, however,  the gradient estimator is biased. In the extreme case of $M=1$, the bias is so severe that the gradient reduces to that of standard VI. To correct for this finite-batch bias, we further develop a second novel gradient estimate based on rejection-sampling. For this gradient, at each SGD iteration given a $\phi$, we draw $M$ IID copies $\theta_1, \dots, \theta_M$ from $q_\phi(\theta)$, and $M$ IID copies $t_1, \dots, t_M$ from Uniform(0,1) distribution. For each $j$, we compute $p(y_i|\theta_j)/C$. If $t_j < p(y_i|\theta_j)/C $ then we accept the draw $\theta_j$ for $y_i$. If at least one draw is accepted we approximate $g_i^{\log}(\phi)=\frac{d}{d\phi}\log \int p(y_i|\theta) q_{\phi}(\theta)d\theta$ by
\begin{equation}\label{eq_RS}
g_i^{\RS} \coloneqq  \frac{\sum_{j=1}^M \left(\mathbbm{1} (t_j  < p(y_i|\theta_j)/C )  \frac{\partial }{\partial \phi} \log   q_\phi (\theta_j) \right) }{\sum_{j=1}^M \mathbbm{1} (t_j  < p(y_i|\theta_j)/ C )}.    
\end{equation}
We show that this novel gradient estimate \eqref{eq_RS} is always unbiased, regardless of the batch size, and hence guarantees the SGD convergence under regular conditions.
\begin{proposition}
    For any $M\geq 1$, $\E_{\theta, t} [g_i^{\RS}] = g_i^{\log}(\phi)$.
\end{proposition}
See Supplement \ref{sec:rejection} for proofs and more details.


 

\textbf{The quadratic score} or equivalently the Brier score \citep{brier1950verification} is useful to examine predictions on categorical-valued outcomes, $y_i\in \Omega=\{1,2,...,I\}$.  It is sometimes more sensitive than the log score for prediction evaluations~\citep{selten1998axiomatic}. 
To run PVI, we count the occurrences in the data: category $i$ has $n_i$ occurrences. 
Denote the conditional predictive mass function by
$f(\theta,y_i) = \Pr(Y=y_i|\Theta=\theta)$; then, the posterior predictive distribution is $\Pr(Y=y_i)=\int_\Theta f(\theta,y_i)q_{\phi}(\theta)d\theta$. We present the objective and its gradient estimator in the second row of Table \ref{tab:scores}. In the supplement, we show that this gradient evaluation converges in probability and leads to a valid PVI optimization.


 \textbf{The interval score}  \citep[IS,][]{dunsmore1968bayesian, winkler1968good} measures the sharpness and calibration of the predictive intervals given true data. Suppose $y\in \mathbb{R}$. Under the significance level $\alpha$, the probabilistic forecast $Q$ corresponds to the prediction interval $[L_{\alpha},U_{\alpha}]$. The objective of the interval score is in the third row of Table \ref{tab:scores}.
In PVI, we need to optimize the parameter $\phi$ by minimizing the interval score of the predictive model $Q_{\phi}(y)=\int_{\Theta}p(y|\theta)q_{\phi}(\theta)d\theta$. However, it may be hard to derive $ L_{\alpha}$ and $U_{\alpha}$ analytically. Instead, we generate $M$ samples, estimate the sample quantiles $\hat{L}_{\alpha}$ and $\hat{U}_\alpha$, and compute the IS using the quantile estimates. During optimization, we use the gradients of estimated quantiles from autodiff, $\frac{\partial \hat{L}_{\alpha}}{\partial \phi}$ and $\frac{\partial \hat{U}_{\alpha}}{\partial \phi}$. In the supplement, we show these gradient estimators are convergent.
 
 \textbf{The continuous ranked probability score}   \citep[CRPS,][]{matheson1976scoring} measures the $L^2$ distance between the cumulative density functions (CDF) of the prediction and  true data-generating distribution. It could be interpreted as the combination of IS across all significance levels~\citep{gneiting2011comparing}. To start, assume the observation $y_i \in \R$ is a scalar. 
 Typically neither the true nor the predicted CDF is of a closed-form, and we evaluate the CRPS via the simulations from the posterior predictions: all we need is to draw $\theta$ from $q_{\phi}(\theta)$, and then $y^\mathrm{sim}$ from $p(y|\theta)$ in sequence. The fourth row of Table~\ref{tab:scores} presents the CRPS objectives, and its gradient. Our estimates only require simulations $y^\mathrm{sim}$ from the posterior predictive, not the density, making it readily usable for simulation-based inference (SBI). We prove this gradient is always unbiased as long as $y^{\mathrm{sim}}$ can be differentiated through, leading to a stronger convergence guarantee for the optimization.

Although the original CRPS is designed for one-dimensional outcomes, it naturally extends to higher dimensional outcomes by replacing the scalar absolute value $|\cdot|$ in the objective with a non-negative, continuous negative-definite kernel function \citep[for details, see][]{gneiting2007strictly}. In our experiment when the outcome $y_i$ is high-dimensional microscopy images, we simply plug in the   $L^2$ distance between observed $y_i$ and simulations $y^{\mathrm{sim}}$ in the CRPS and gradient evaluation, which is still a valid proper scoring rule. 

\paragraph{PVI in likelihood-free inference.}

Many recent scientific applications involve intractable likelihoods: the sampling distribution $p(y|\theta)$ may contain a complex simulator, such that we cannot evaluate the conditional density $p(y|\theta)$ but still can draw simulations $y^{\mathrm{sim}}$ from it. Simulation-based inference \citep[SBI,][]{cranmer2020frontier} is a successful tool for likelihood-free tasks, but the best an SBI algorithm can achieve is to be faithful to the Bayesian posterior, which may suffer from model-misspecification~\citep[e.g.,][]{cannon2022investigating}. In contrast, because we have developed the general PVI method incorporating an arbitrary scoring rule, we handle likelihood-exact and -free inference in a unified framework. In particular, our proposed PVI with CRPS is well-suited for such likelihood-free applications: All we need is a continuous outcome space ($y\sim \R^d$) and a differentiable simulator. For all the reasons given above, PVI is more robust against model misspecification and better fits the true data process. In our later cryoEM experiment in Section~\ref{sec:cryoem}, we use PVI-CRPS to learn the population distribution of frozen molecules with a simulator. Here, the PVI-learned $q_\phi(\cdot)$ is not merely a prediction-oriented augmentation, but a physically meaningful  quantity.

\section{Experiments}\label{sec:exp}

We present three examples. We show how PVI is used as a tool for detecting model misspecification in a golf putting model. In the election example, we use PVI as a heterogeneity detection and a guidance for model expansion. In a cryoEM example, we apply PVI with CRPS to infer protein structures from an intractable likelihood. Additional experiments are in Supplement \ref{additional}. In Section \ref{sec:stan} we include a benchmark on 7 \texttt{posteriordb}~\citep{magnusson2024posteriordb} models to compare different scoring rules, where we treat the regularization strength $\lambda$ and the choice of whether to regularize towards the prior or posterior as tuning parameters, and we use cross-validation to select them. The code is available at \url{https://github.com/lll6924/pvi.git}.

\subsection{Model misspecification detection for golf putting}\label{sec:golf}
\begin{figure}
    \centering
    \includegraphics[width=0.45\linewidth]{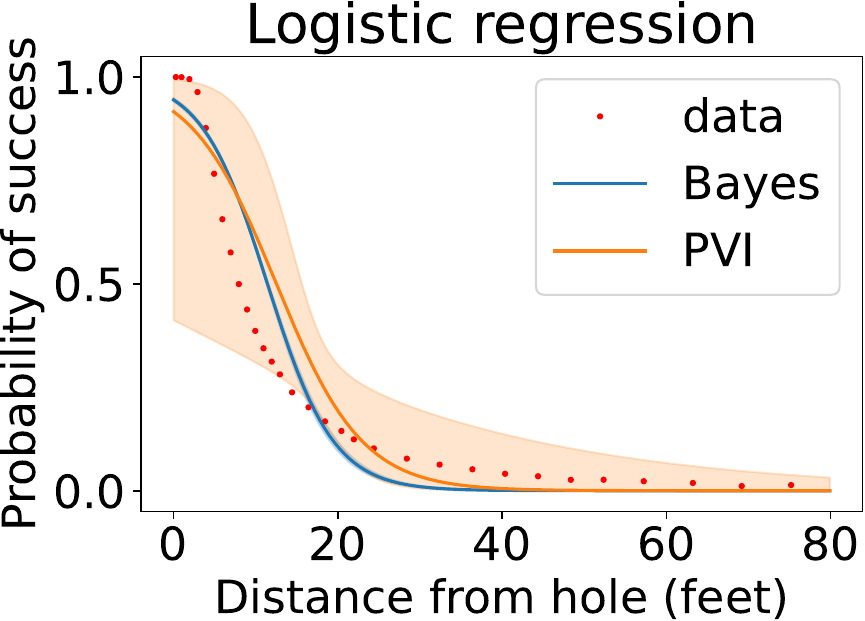}
    \includegraphics[width=0.45\linewidth]{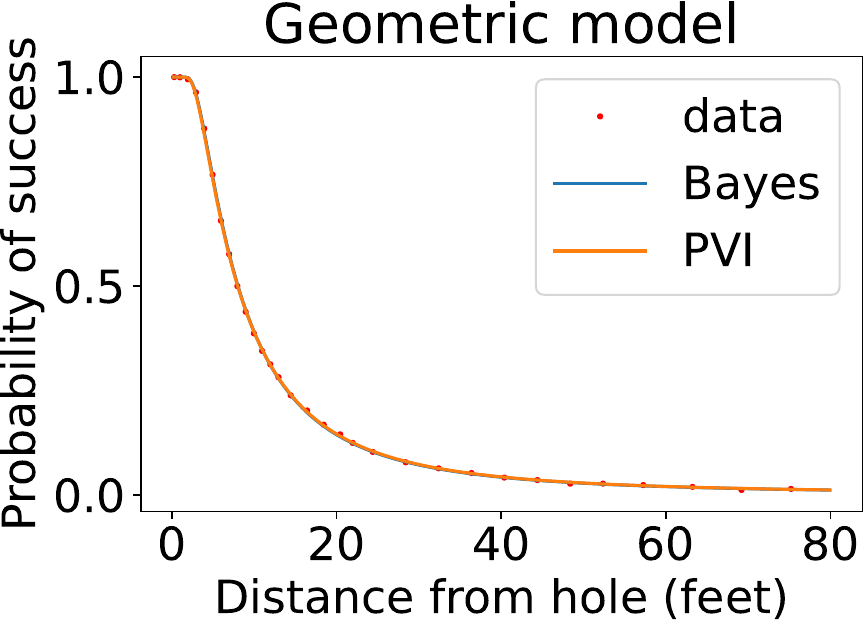}
    \caption{Posterior predictive distribution of two different models of the golf putting problem, with either the Bayesian posterior or the PVI posterior. }
    \label{fig:golf}
\end{figure}

\citet{gelman2002probability} modeled the proportion of successful putts from professional golfers as a function of distance from the hole. There were two models for the problem: the logistic regression model and the geometric model. We plot the posterior predictive distribution of both models with either the Bayesian or PVI posterior in Figure \ref{fig:golf}. Note that logistic regression is misspecified so the Bayesian posterior struggles to fit the curve and selects a narrow and incorrect prediction. But such misspecification is addressed by PVI with a wider posterior. The geometric model fits the data well, and both the Bayesian and PVI posterior select the correct parameters. This indicates that PVI can be used as a tool for misspecification detection. When a wide posterior is observed, we should modify the model for better fit. We will demonstrate further with the election model next.

\subsection{Model expansion in  U.S. election analysis}\label{sec:voting}

In the next application in survey sampling, we use PVI to both guide the hierarchical model building, and improve the within-stratum predictive performance, which is crucial in multilevel regression and post-stratification (MRP).
As previously studied by \citet{ghitza2013deep}, we analyze a U.S. presidential election example,  using the Current Population Survey’s (CPS) post-election voting and registration supplement in the Year 2000 ($N=$68,643 respondents) to fit a binomial regression model that predicts voter turnouts given a respondent's state ($\{1,2,...,51\}$ including D.C.), ethnicity group ($\{1,2,3,4\}$) and income level ($\{1,2,...,5\}$). Bayesian multilevel modeling typically starts by modeling the marginal state-, ethnicity- and income-effects. Next, the income effect may vary by state, so it makes sense to model the two-way interactions, and then the three-way interactions. Hence, even in a simple binomial regression with full interactions, we have at least $51\times5\times4=$1,020 coefficient parameters. But it is still not saturated: there are still many unobserved confounders and hence this state$\times$income$\times$ethnicity effect may vary in the population, and an individualized fully-hierarchical model expansion (Section~\ref{sec:Asymptotic}) would contain 1,020$\times$68,643 = 70 million parameters. We need some guidance on how deep the interactions we want to include in the model.
 
To test how PVI helps with model checking and further guides iterative model building, we first generate synthetic data so that we can compare the inference with a ground truth. In the survey data, in the $n$-th cell defined by the state $i_n$, ethnicity group $j_n$, and income level $x_n$, out of $N_n$ total respondents, $y_n$ of them are positive outcomes (1 = turnout). We simulate a dataset from a generating process:
\begin{align}
    y_n\sim\text{Binomial}(N_n,\text{Logit}^{-1}(\beta_{1,i_n}+\beta_{2,j_n}+\beta_{3,i_n}x_n)),\notag
\end{align}
where $\BB_1\in\R^{51}$, $\BB_2\in\R^{4}$, $\BB_3\in\R^{51}$.
We then run inference (our PVI and classical VI) on the well-specified model and three incomplete/misspecified models
\begin{align}
    y_n&\sim\text{Binomial}(N_n,\text{Logit}^{-1}(\beta_{1,i_n})),\notag\\
    y_n&\sim\text{Binomial}(N_n,\text{Logit}^{-1}(\beta_{1,i_n}+\beta_{2,j_n})),\notag\\
    y_n&\sim\text{Binomial}(N_n,\text{Logit}^{-1}( \beta_{1,i_n}+\beta_{2,j_n}+\beta_{3}x_n)).\notag
\end{align}

\begin{figure}[t]
\centering
    \includegraphics[width=0.22\linewidth]{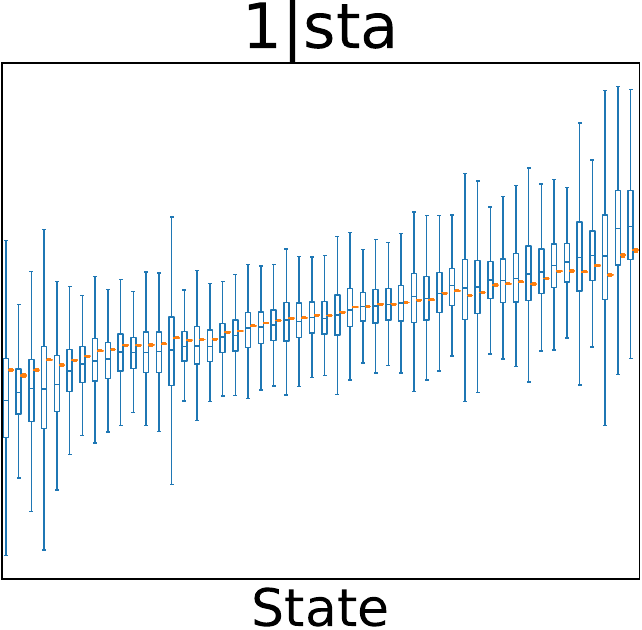}
    \includegraphics[width=0.22\linewidth]{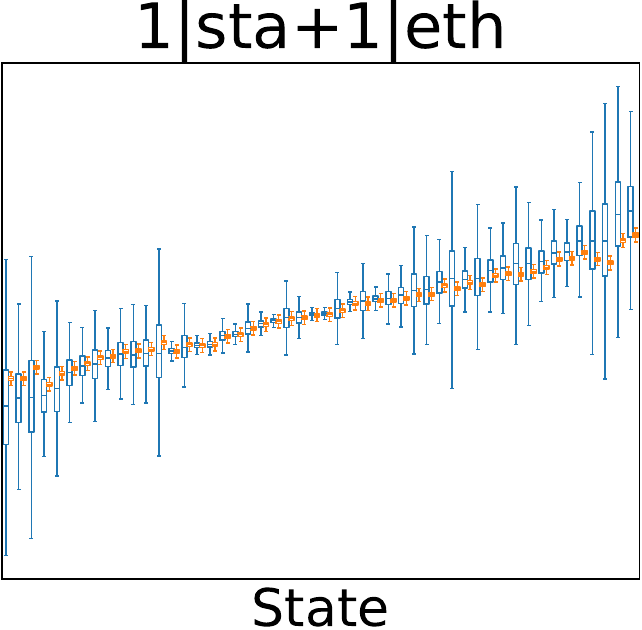}
    \includegraphics[width=0.22\linewidth]{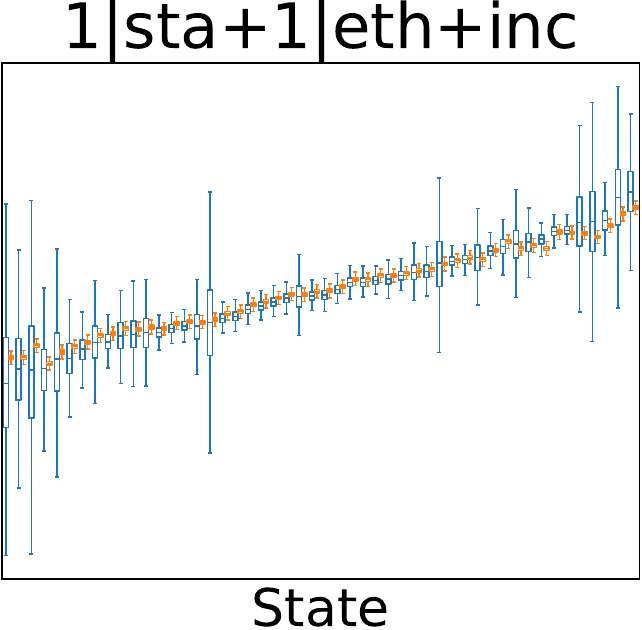}
    \includegraphics[width=0.22\linewidth]{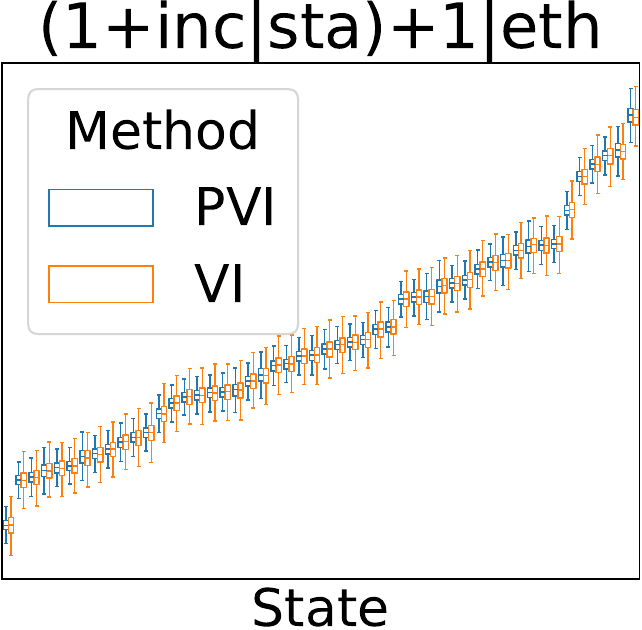}
    \caption{Distribution of inferred logit across states with $x=1$ for ethnic group $1$ from four different voting models. To focus on the variability across states, we set $\beta_{2,1}$ to its sample mean.
    }
    \label{fig:voting}
\end{figure}
\begin{figure}[t]
    \centering
    \includegraphics[width=0.22\linewidth]{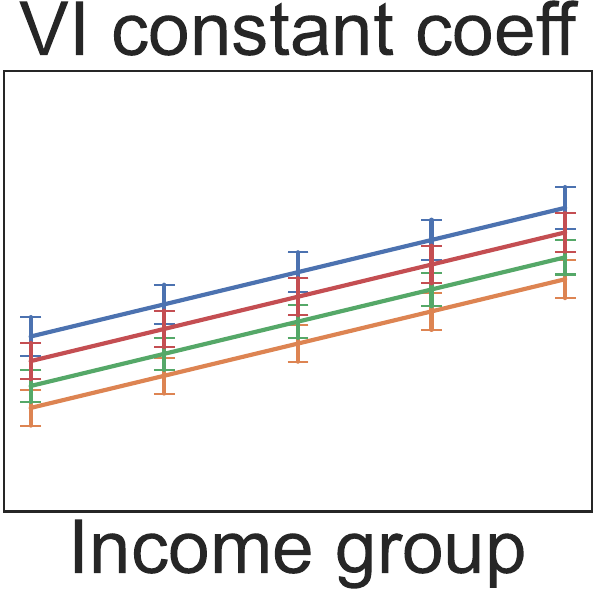}
    \includegraphics[width=0.22\linewidth]{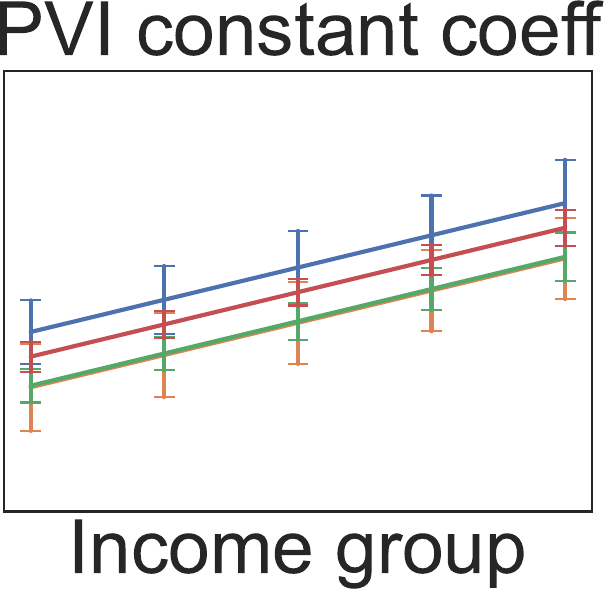}
    \includegraphics[width=0.22\linewidth]{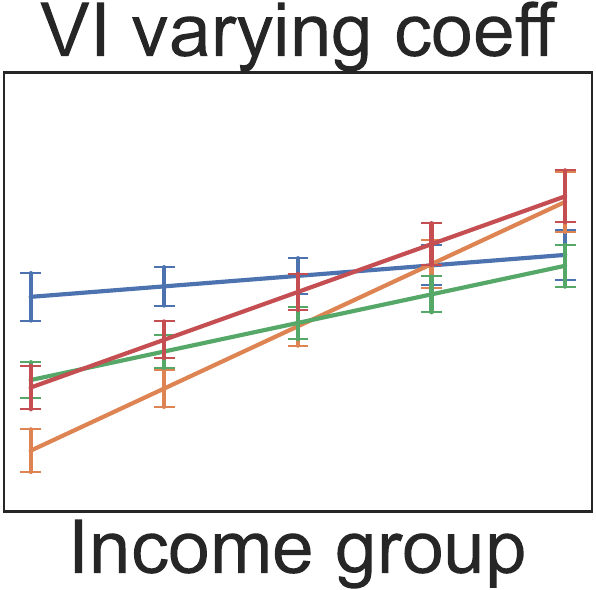}
    \includegraphics[width=0.22\linewidth]{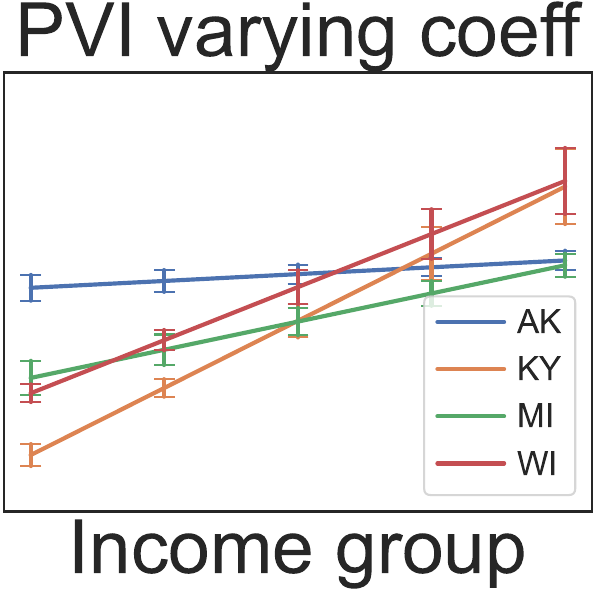}
    \caption{Regression logits of four states learned from the Current Population Survey’s (CPS) post-election voting and
      registration supplement in 2000. Unrelated parameters are replaced by their sample means. Two models are compared: one with a constant coefficient, and one with varying coefficients among states.
    }
    \label{fig:real_voting}
\end{figure}
Figure \ref{fig:voting} demonstrates the inferred logits across states with different models. Despite models being misspecified, VI always concentrates, while our PVI correctly picks a wider variational distribution, which we regard as evidence of model misspecification.
This illustrates PVI's value for model checking; the closer a statistical model is to the truth, the less posterior PVI variability is observed. In the last subfigure, the PVI posterior successfully concentrates when the model is correct, to the same parameters as VI.

Next, in the real data analysis, we fit two models using both PVI and VI: a model with (a) a constant coefficient across states, and (b) varying coefficients across states. We plot the logits of four different states in Figure \ref{fig:real_voting} to demonstrate the behaviors of PVI. PVI on the constant coefficient model reveals that the posterior variances of some states (AK, KY) are higher than other states, suggesting that we should expand the model to allow varying coefficients in these states, which agrees with the conclusion of \citet{ghitza2013deep}. In the varying coefficients model, the two states indeed have different slopes. In contrast, VI assigns equal variances to all states due to the model restriction.

\subsection{Likelihood-free cryoEM inference}\label{sec:cryoem}
 
\begin{figure}
    \centering
        \includegraphics[width=0.39\linewidth]{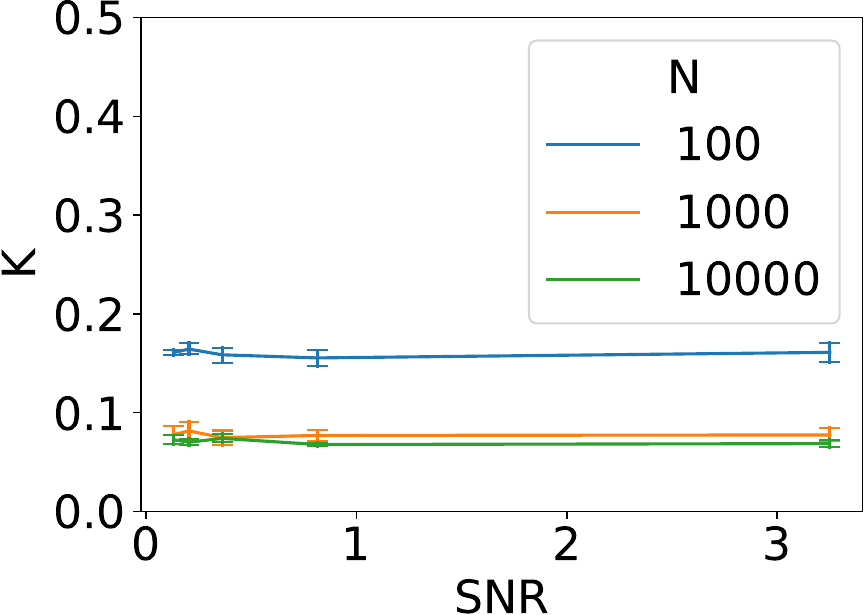}
        ~~~~
        \raisebox{10pt}{\includegraphics[width=0.33\linewidth]{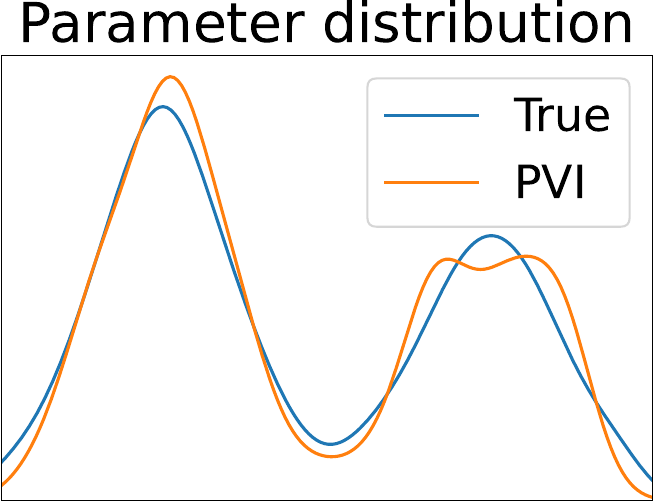}}
    \caption{Parameter inference for cryoEM with PVI. In the first panel, the y-axis reports the gap $K$ between the inferred distribution and the true distribution with varying SNR and sample size, where the $K$ is computed from a two-sample KS test.
    The second figure demonstrates that the PVI inferred posterior distribution accurately recovers the true distribution in a realistic setting when $N=10,000$ and $\text{SNR}\approx 0.13$.}
    \label{fig:cryoem}
\end{figure}

Inferring the structures of proteins is a challenging problem for computational biology. With the technique of cryo-electron microscopy \citep[cryoEM,][]{nogales2016development, singer2020computational}, scientists obtain a large collection of 2D protein images. 
Recently, \citet{dingeldein2024amortized} used SBI to analyze cryoEM images. The parameter of interest $\theta$ represents the protein structure, the sampling model of images follows $\y=f(\theta, \EE)$, where $\EE\sim p^{\mathrm{noise}}(\cdot)$ represents other nuisance parameters (rotations, translations, and measurement errors). The simulator is differentiable, but the likelihood $p(\y|\theta)$ is intractable because it is difficult to integrate all parameters. Moreover, the protein structure $\theta$ is not one single fixed truth across all images: the proteins were distributed via their free energy distribution when they were frozen. Due to computational challenges, this protein dynamic is not fully addressed in past works, which only attempt to approximate the exact but overconfident Bayes posterior $\theta |\y$ even when the image sample size $N$ is big. We use CRPS-equipped PVI to find an optimal $\phi$, such that the simulations from $\int_\Theta p(y|\theta)q_{\phi}(\theta)d\theta$ match the observed images as close as possible under the CRPS metric. This approach is computationally efficient as it does not need likelihood evaluations or expensive prior simulations. The learning of the $\theta$-distribution is effectively nonparametric as we use a normalizing flow.

We work with a realistic simulator of heat shock proteins \citep[HSP90, ][]{whitesell2005hsp90}. An HSP90 protein contains two chains where the opening $\theta$ controls the conformation. We choose a parameter distribution with two modes and generate a synthetic dataset of noisy images. In the simulator, the noise includes a normal error and a random defocus for the contrast transfer function (CTF) of the lens, making the signal-to-noise ratio (SNR) of images very low.  Figure \ref{fig:cryoem-point} shows that the variational posterior $q_\phi(\cdot)$ learned from PVI with CRPS quickly matches the population ground-truth openings of the protein population, while the Bayes posterior ignores heterogeneity and collapses to point estimates.  Figure \ref{fig:cryoem}  visualizes the accuracy of the PVI posterior as a function of SNR and sample size: the PVI posterior converges to the target even with a low SNR.  

\section{Discussion}\label{sec:related}

\paragraph{Generalized VI and generalized Bayes.}
It is not a new idea to reframe Bayesian inference as an infinite dimensional optimization. As mathematically equivalent to \eqref{eq:Bayes_opt}, \citet{zellner1988optimal} stated that the minimizer of $ \E_{\theta\sim q(\cdot)} \sum_{i=1}^{n} \left[- \log p(y_i|\theta)\right] + \KL (q ~||~ p^{\mathrm{prior}}) $ is the classical Bayesian posterior $q(\cdot) = p(\cdot|y)$, while the vanilla VI is the solution of the constrained optimization. Motivated by this view, many variations of Bayes have emerged. \citet{knoblauch2022optimization} define a ``generalized VI'' framework (rule-of-three),
 \begin{math}
     \min_{q\in \mathcal{F}}  \E_{\theta\sim q(\cdot)} \left(\sum_{i=1}^{n} l(y_i, \theta) \right)+ D (q  ~||~ p^{\mathrm{prior}}), 
 \end{math}
where the user can choose a  $(y\!\times\! \theta)$-space loss function $l(y_i, \theta)$, a $\theta$-space divergence $D$, and a feasible region $\mathcal{F}$, and this framework includes many other generalized-VIs as special cases such as   decision-theory-based-VI \citep{lacoste2011approximate, kusmierczyk2019variational}, and Gibbs-VI \citep{alquier2016properties}. 
In the same vein,  \citet{bissiri2016general} study a generalized Bayes framework: given a $(y\!\times\! \theta)$-space loss function $l(y_i, \theta)$, an unnormalized posterior is defined by $\exp\left[{-\sum_{i=1}^n l(\theta,y_i)}\right] p^{\mathrm{prior}}(\theta)$, a generalization of the PAC- and Gibbs-posterior \citep{zhang2006information, jiang2008gibbs}.
\citet{grunwald2017inconsistency} introduce a learning rate $\eta$ to the Bayes model, which controls how close the posterior should be to the prior. When the likelihood is intractable, \citet{matsubara2022robust} use Stein discrepancy as a loss function in generalized Bayes. 

Our PVI framework is spiritually different from existing generalized-Bayes and -VI procedures.
To see their difference, take the default loss function, the negative log likelihood, $l(y, \theta)= -\log p(y|\theta)$, and $D$ to be the KL divergence, then both generalized VI and generalized Bayes degenerate to the vanilla Bayes posterior. In contrast, the PVI posterior under the default log score is generally not the same, even asymptotically, as the vanilla Bayes posterior. Even with the same log score, PVI and the generalized VI differ in the position of $\log$ and the integral --- PVI evaluates the global predictive performance of the posterior-averaged predictive distribution, while the generalized VI assesses the posterior-averaged local predictive performance.

\paragraph{Limitations and future directions.}

In this paper, we performed minimal tuning of PVI for a fair comparison. There is large room for fine-tuning (tuning $\lambda$, interpolating between the prior and posterior regularization, choice of scoring rules and combinations, variance reduction of the stochastic gradient, etc.). Throughout the paper, we have relied on the assumption that observations are conditionally IID given covariates, or at least exchangeable, which is a limitation compared with the regular Bayes. This IID requirement is shared by any scoring-rule-based model evaluation,  generalized VI, and any loss-function-based learning algorithm. We can extend our current IID PVI to   complex data structures; it requires defining internal replications and is beyond this paper.

Theoretically we  justify PVI via  asymptotic optimality. It is useful to further study its finite sample rate, and compare the convergence rate under various scoring rules. 
Unlike regular Bayes which may be overconfident, the parameter uncertainty in PVI can grow as a function of sample size (Figure~\ref{fig:normal}), and may not shrink to zero, suggests the potential of turning PVI into a new uncertainty measurement.

PVI is robust to model misspecification and flexible for prediction,  but these benefits come with costs.  
First, as illustrated in the Gaussian example in Figure~\ref{fig:normal},  when the model is correctly specified, the PVI posterior still concentrates  at the correct point mass asymptotically, but its contraction rate is slower than the usual parametric rate. This slower rate agrees with the theoretical analysis in concurrent work by \citet{mclatchie2025predictively}.
Second, the PVI learning task \eqref{eq_vi_obj} in probability space  is harder than finite-dimensional parameter learning, limiting scalability in modern high-dimensional models. We view this increased computational burden as a tradeoff between a \emph{complex model} and \emph{complex inference}.  While classical machine learning often focuses on making the model deeper,  PVI instead pushes the complexity of inference. For scalable implementation,  a promising future direction is to study the allocation between model  and inference complexity, and to develop structured variational families and customized optimization schemes for PVI.


In this paper, we demonstrate practical implementations of PVI  
under several scoring rules. On one hand, practitioners may have their own decision-theoretic  
scoring rules, and PVI can accommodate these choices.  
On the other hand, PVI provides a unified framework  
for comparing the efficiency and robustness of different scoring rules  
when used for inference, which we leave for future research.

\section*{Acknowledgment}
The authors thank Pilar Cossio, Luke Evans, and Erik Thiede for  help with cryoEM experiments. This work was partially completed while YY and JL were at the Flatiron Institute. ARL is supported by NSF grant DMS-2144933.
\section*{Impact Statement}
This paper presents methodological work whose goal is to advance the general area of probabilistic inference. There may be potential societal consequences of our work when it is applied in specific domains, none of which we feel must be specifically highlighted here.

\bibliography{reference}
\bibliographystyle{icml2026/icml2026}

\appendix
\onecolumn
\section*{\centering Appendices to ``Predictive Variational Inference''}
\setcounter{theorem}{0}
In the appendices, we provide proofs and additional discussion of the theoretical claims. We also provide details of our numerical experiments. 

\section{Proofs}\label{sec:all_proofs}
\subsection{Proof of Proposition 1}\label{sec:proof_consist}
We restate Proposition 1 formally and prove the properties.
\begin{proposition}[formal]
Suppose the variational distribution $q_\phi(\theta)$ is parameterized by $\phi\in\Phi$, where $\Phi$ is compact. With any strictly proper score function $S$, an unknown true data generating process distribution $p_{\true}(y)$, a likelihood model $p(y|\theta)$, and a size-$n$ sample $y_1,y_2,...,y_n\sim p_{\true}(\cdot)$, let $\phi_n$ be the solution of the PVI objective with a continuous prior regularization on $\phi$
\begin{equation}
\phi_n=\argmax_{\phi\in\Phi} \left(\sum_{i=1}^n  S\left(\int p(\cdot|\theta) q_{\phi}(\theta) d\theta, y_i\right)-\lambda r^{\mathrm{prior}}(\phi)\right).
\end{equation}
The predictively optimal variational parameter  $\phi_0$ is defined as
\begin{align}
    \phi_0\coloneqq\argmax_{\phi\in\Phi}S\left(\int p(\cdot|\theta)q_\phi(\theta)d\theta,p_{\true}(\cdot)\right).\notag
\end{align}
Define $\ell(y, \phi) = S(\int p(\cdot | \theta) \, q_\phi(\theta) \, d\theta,y)$, then we have (1) consistency: if $\E_{y\sim p_{\true}}\left[\sup_{\phi\in\Phi}\left|\ell(y,\phi)\right|\right]<\infty$, $\phi_0$ is unique, and $\ell(y_i,\phi)$ is continuous at each $\phi\in\Phi$, then $\phi_n\overset{p}{\to}\phi_0$ as $n\to\infty$; (2) asymptotic normality: if additionally, $\phi_0\in \mathrm{interior}(\Phi)$, $\ell(y_i,\phi)$ is twice continuously differentiable, $\sqrt{n}\nabla_\phi \frac{1}{n}\sum_{i=1}^n\ell(y_i,\phi_0)\xrightarrow{d}\MVN(0,\Sigma)$, and $\sup_{\phi}\|\nabla^2_\phi \frac{1}{n}\sum_{i=1}^n\ell(y_i,\phi)-H(\phi)\|\xrightarrow{p}0$ for a continuous $H(\phi)$, and let $H=H(\phi_0)$ which is not singular, then $ \sqrt{n} (\phi_n - \phi_0) \xrightarrow{d} \MVN\left( 0, \, H^{-1} \Sigma H^{-1} \right)$.
\end{proposition}
\begin{proof}
The proof is technically the same as the convergence and asymptotic normality of MLE, with additional consideration of the regularization. There exist other sets of assumptions that lead to the same results. To simplify the notations here, we define
    \begin{align}
        T(\phi) &= S\left(\int p(\cdot|\theta)q_\phi(\theta)d\theta,p_{\true}(\cdot)\right),\notag\\
        T_n'(\phi) &= \frac{1}{n}\sum_{i=1}^n \ell(y_i,\phi),\notag\\
        d(y)&=\sup_{\phi\in\Phi}\left|\ell(y,\phi)\right|.\notag
    \end{align}
    Then $\phi_0=\argmax_{\phi\in\Phi}T(\phi)$. Our assumption says that $\ell(y_i,\phi)$ is continuous at each $\phi$ and $|\ell(y,\phi)|\le d(y)$ for $\E_{y\sim p_{\true}}[d(y)]<\infty$. By the dominated convergence theorem, $T(\phi)=\E_{y\sim p_{\true}}[\ell(y,\phi)]$ is continuous on $\phi$. Using uniform law of large numbers,
    \begin{align}\label{eq:zero_lambda_convergence}
        \sup_{\phi\in\Phi}\left|T_n'(\phi)-T(\phi)\right|\overset{p}{\to} 0.
    \end{align}
This basically implies the convergence result when $\lambda=0$. Next, we define
\begin{align}
    T_n(\phi) &= \frac{1}{n}\sum_{i=1}^n \ell(y_i,\phi)-\frac{\lambda}{n}r^{\mathrm{prior}}(\phi),\notag
\end{align}
such that $\phi_n=\argmax_{\phi\in\Phi}T_n(\phi)$. As $\Phi$ is compact, there exists $U$ such that $0\le r^{\mathrm{prior}}(\phi)\le U$. Hence $\sup_{\phi\in\Phi}|T_n(\phi)-T_n'(\phi)|\to 0$. Along with \eqref{eq:zero_lambda_convergence}, we have that
    \begin{align}
        \sup_{\phi\in\Phi}\left|T_n(\phi)-T(\phi)\right|\overset{p}{\to} 0.\notag
    \end{align}
With Theorem 2.1 in \citet{newey1994large}, we have $\phi_n\overset{p}{\to}\phi_0$. 

For asymptotic normality, note that $\nabla_\phi\frac{\lambda}{n}r^{\mathrm{prior}}(\phi)=\mathcal{O}(1/n)$ and $\nabla^2_\phi\frac{\lambda}{n}r^{\mathrm{prior}}(\phi)\to 0$, so we further have $\sqrt{n}\nabla_\phi T_n(\phi)\xrightarrow{d}\MVN(0,\Sigma)$, and $\sup_{\phi}\|\nabla^2_\phi T_n(\phi)-H(\phi)\|\xrightarrow{p}0$. With Theorem 3.1 in \citet{newey1994large}, we also have the asymptotic normality.
\end{proof}

\setcounter{theorem}{3}

\subsection{Unbiased rejection sampling for the Logarithmic score gradient}\label{sec:rejection}
We propose a rejection-sampling-based gradient estimate for each $g_i^{\log}(\phi)=\frac{d}{d\phi}\log \int p(y_i|\theta) q_{\phi}(\theta)d\theta$ under log score. It is particularly simple when the pointwise likelihood $p(y_i|\theta)$ is bounded. That is, there exists a known scalar constant $C>0$ such that $p(y_i|\theta) < C$ for all $i$ and  $\theta \in \Theta$. We will discuss this boundedness condition later.

At each SGD iteration given a $\phi$, we draw $M$ IID copies $\theta_1, \dots, \theta_M$ from $q_\phi(\theta)$, and $M$ IID copies $t_1, \dots, t_M$ from Uniform(0,1) distribution. 
Pointwise, we compute $p(y_i|\theta)/C$,  if $t_j <  p(y_i|\theta_j)/C $ then we accept the draw $\theta_j$. If at least one draw is accepted, we approximate the gradient $g_i^{\log}(\phi)$ by  
\begin{equation}\label{eq_RS}
g_i^{\RS} \coloneqq  \frac{\sum_{j=1}^M \left(\mathbbm{1} (t_j  < p(y_i|\theta_j)/C )  \frac{\partial }{\partial \phi} \log   q_\phi (\theta_j) \right) }{\sum_{j=1}^M \mathbbm{1} (t_j  < p(y_i|\theta_j)/ C )}.    
\end{equation}

If none of the draws are accepted, we resample $\theta_1,\ldots,\theta_M$ and $t_1,\ldots,t_M$ until we have an estimator. Finally we compute $g^{\RS} =  \sum_{i=1}^n g_i^{\RS}  $. 

Shifting from importance sampling to rejection sampling seems a small step change, but it ensures an unbiased gradient estimate even with one Monte Carlo draw, i.e.,  

\begin{proposition}
    For any sample size $M\geq 1$, $\E_{\theta, t} [g_i^{\RS}] = g_i^{\log}(\phi)$.
\end{proposition}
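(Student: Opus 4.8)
The plan is to recognize $g_i^{\log}(\phi)$ as a posterior expectation of a score function and then exploit the defining property of rejection sampling, namely that accepted draws are exact samples from that posterior. First I would apply the log-derivative (score-function) identity to the single-observation normalizing constant $Z_i(\phi) \coloneqq \int p(y_i|\theta) q_\phi(\theta)\, d\theta$. Writing $\partial_\phi q_\phi = q_\phi\, \partial_\phi \log q_\phi$ and differentiating under the integral sign gives $g_i^{\log}(\phi) = Z_i'(\phi)/Z_i(\phi) = \int \tilde p_i(\theta)\, \partial_\phi \log q_\phi(\theta)\, d\theta$, where $\tilde p_i(\theta) \coloneqq q_\phi(\theta) p(y_i|\theta)/Z_i(\phi)$ is the posterior tilted by the single datum $y_i$. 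Thus the target gradient is exactly the mean of the score $\partial_\phi \log q_\phi(\theta)$ under $\tilde p_i$.

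Next I would record the standard rejection-sampling fact: for a single proposal $(\theta_j, t_j)$ with $\theta_j \sim q_\phi$ and $t_j \sim \mathrm{Uniform}(0,1)$, the acceptance event $A_j = \{t_j < p(y_i|\theta_j)/C\}$ has probability $\E_{q_\phi}[p(y_i|\theta)/C] = Z_i(\phi)/C \eqqcolon \alpha$, and conditioned on $A_j$ the law of $\theta_j$ is precisely $\tilde p_i$. Since the $M$ pairs $(\theta_j, t_j)$ are i.i.d., the acceptance indicators are i.i.d. $\mathrm{Bernoulli}(\alpha)$, so the accepted index set $\mathcal S = \{j : A_j\}$ and its size $K = |\mathcal S|$ are governed entirely by these independent coins; conditioned on $\mathcal S$, the accepted draws $\{\theta_j : j \in \mathcal S\}$ are i.i.d. from $\tilde p_i$ and independent of the rejected ones.

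The main obstacle is that $g_i^{\RS}$ is a ratio whose denominator $K$ is random, so one cannot take expectations of numerator and denominator separately. I would resolve this by conditioning on the accepted set $\mathcal S = s$ with $|s| = k \ge 1$, which makes the denominator deterministic and the numerator a sum of $k$ i.i.d. terms. Then $\E[g_i^{\RS} \mid \mathcal S = s] = \frac{1}{k}\sum_{j \in s} \E_{\tilde p_i}[\partial_\phi \log q_\phi(\theta_j)] = g_i^{\log}(\phi)$, which does not depend on $s$. Because this conditional expectation equals $g_i^{\log}(\phi)$ for every admissible $s$, the tower property yields $\E[g_i^{\RS} \mid K \ge 1] = g_i^{\log}(\phi)$, the claimed unbiasedness, the estimator being defined only on the event of at least one acceptance. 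The only regularity required is the interchange of differentiation and integration in the first step, which holds under the usual dominated-convergence conditions on $q_\phi$.
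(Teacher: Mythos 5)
Your proof is correct and is essentially the paper's own argument: the paper likewise conditions on the set of accepted indices, shows the conditional expectation is the target gradient regardless of the pattern (hence regardless of $K\geq 1$), and concludes by collecting all acceptance cases. The only difference is packaging—where you invoke the standard rejection-sampling lemma (accepted draws are i.i.d.\ from the tilted posterior $\tilde p_i \propto q_\phi(\theta)\,p(y_i|\theta)$) together with the score identity $g_i^{\log}(\phi)=\E_{\tilde p_i}\left[\partial_\phi \log q_\phi(\theta)\right]$, the paper verifies these same two facts inline by explicit manipulation of the joint integrals.
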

\begin{proof}
    Without loss of generality, assume $C=1$. We first show that each $g_i^{\RS}$ is an unbiased estimator for $g_i^{\log}(\phi)$.
    We consider a subsequence of the auxiliary samples $t_{n_1},t_{n_2},...,t_{n_k}$, where $1\le n_1<n_2<...<n_k\le M$ and $N=\{n_1,...,n_k\}$ is the set of indices, and condition on the case where their draws are accepted but others are rejected. Denote the predictive distribution by $q^{Y}_\phi(y_i)=\int p(y_i|\theta)q_\phi(\theta)d\theta$. Then we have
    \begin{align}
        &\mathbb{E}[g_i^{\RS}|\text{accept }t_{n_1},...,t_{n_k}]\notag\\
        =&\frac{\mathbb{E}_{\text{accept }t_{n_1},...,t_{n_k}}[g_i^{\RS}]}{P(\text{accept }t_{n_1},...,t_{n_k})}\notag\\
        =&\frac{\idotsint\prod_{m=1}^Mq_\phi(\theta_{m})\prod_{j=1}^kp(y_i|\theta_{n_j})\prod_{j\not\in N}(1-p(y_i|\theta_j))\frac{1}{k}\sum_{j=1}^k\frac{\partial}{\partial\phi}\log q_\phi(\theta_{n_j})d\Theta}{\idotsint\prod_{m=1}^Mq_\phi(\theta_{m})\prod_{j=1}^kp(y_i|\theta_{n_j})\prod_{j\not\in N}(1-p(y_i|\theta_j))d\Theta}\notag\\
        =&\frac{\idotsint\prod_{j\not\in N}q_\phi(\theta_j)(1-p(y_i|\theta_j))\frac{1}{k}\sum_{l=1}^k\int \ldots\int p(y_i|\theta_{n_l})\frac{\partial}{\partial\phi} q_\phi(\theta_{n_l})\prod_{j=1}^{k,j\neq l}q_\phi(\theta_{n_j})p(y_i|\theta_{n_j})d{\Theta_N}d{\Theta_{- N}}}{\idotsint\prod_{j\not\in N}q_\phi(\theta_j)(1-p(y_i|\theta_j))\int \ldots\int\prod_{j=1}^{k}q_\phi(\theta_{n_j})p(y_i|\theta_{n_j})d{\Theta_N}d{\Theta_{- N}}}\notag\\
        =&\frac{\idotsint\prod_{j\not\in N}q_\phi(\theta_j)(1-p(y_i|\theta_j))\frac{1}{k}\sum_{l=1}^k q_\phi^{Y}(y_i)^{k-1}\frac{\partial}{\partial\phi}q_\phi^{Y}(y_i)d{\Theta_{- N}}}{\idotsint\prod_{j\not\in N}q_\phi(\theta_j)(1-p(y_i|\theta_j))q_\phi^{Y}(y_i)^{k}d{\Theta_{- N}}}\notag\\
        =&\frac{q_\phi^{Y}(y_i)^{k-1}\frac{\partial}{\partial\phi}q_\phi^{Y}(y_i)}{q_\phi^{Y}(y_i)^k}\notag\\
        =&g_i^{\log}(\phi).\notag
    \end{align}
    Collecting all cases of acceptance, we directly have $\mathbb{E}[g_i^{\RS}]=g_i^{\log}(\phi)$. 
\end{proof}
At a high level, the gradient estimator in the main text is only asymptotically unbiased, while unnormalized rejection sampling is unbiased with any sample size, hence more suitable for SGD.

\paragraph{Bounded likelihood.}
The assumption on the bounded likelihood can be achieved at two general settings, (1) the observations are discrete thereby $p_i(y_i|\theta)\leq 1$, and (2) the likelihood has a fixed non-zero variance. For example in a one-dimensional linear regression problem, the likelihood is 
$p(y | \beta, \sigma)= \n (y|\beta X , \sigma)$, if $\sigma$ is fixed to be a small number, then $M= (\sqrt{2\pi} \sigma)^{-1}.$

A fixed $\sigma$ is less evil than it seems. First, the model is overparametrized, the observational noise can either be learned from $\sigma$, or the variational variance of the constant feature. Second, PVI allows a more general inference paradigm, where $\phi$ contains both the variational parameters and those parameters that we would wish to learn by point estimate, which will then contain this $\sigma$. Indeed, the rejection sampling idea is still applicable when $M=M(\phi)$ depends on $\phi$, and varies across SGD iterations.

In practice, however, we find that the benefits of having an unbiased estimator are not significant compared with the biased estimator in the main paper. This coincides with the findings of another unbiased estimator derived in \citet{vandegar2021neural}, which is also found not beneficial when the Monte Carlo sample size is large.


\subsection{Convergence of the logarithmic score gradient}
For the logarithmic score
\begin{align}
    \mathcal{L}^{\log}(\phi)=\sum_{i=1}^n \log \int_\Theta p(y_i|\theta) q_{\phi}(\theta) d\theta,\notag
\end{align}
we have a gradient estimator
\begin{align}
    g^{\log}_M(\phi)=\sum_{i=1}^n \frac{\sum_{j=1}^M \frac{d }{d \phi}p(y_i|T_{\phi}(u_j))}{ \sum_{j=1}^M p(y_i|T_{\phi}(u_j))} \notag
\end{align}
after reparameterization of $\theta=T_{\phi}(u)$ and with $M$ samples of $u_1,\ldots,u_M$. We first show that it converges in probability.
\begin{proposition}
As $M\to\infty$, $ g_M^{\log}(\phi)\overset{p}{\to}\frac{d}{d\phi}\mathcal{L}^{\log}(\phi)$.
\end{proposition}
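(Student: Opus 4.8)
The plan is to recognize each summand of the estimator as a ratio of two i.i.d.\ sample averages, apply the law of large numbers to the numerator and denominator separately, and then pass to the limit through the continuous mapping theorem. First I would introduce the shorthand $Z_i(\phi)=\int_\Theta p(y_i|\theta)q_\phi(\theta)d\theta=\E_u[p(y_i|T_\phi(u))]$, which is strictly positive whenever $p(y_i|\cdot)$ is positive on a set of positive $q_\phi$-measure. Differentiating the objective term by term and interchanging $\frac{d}{d\phi}$ with the expectation over $u$ gives the target,
\[
\frac{d}{d\phi}\mathcal{L}^{\log}(\phi)=\sum_{i=1}^n\frac{\E_u\!\left[\frac{d}{d\phi}p(y_i|T_\phi(u))\right]}{Z_i(\phi)}.
\]

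Next, dividing the numerator and denominator of the $i$-th summand of $g_M^{\log}(\phi)$ by $M$ exhibits it as a ratio $A_{i,M}/B_{i,M}$, where $A_{i,M}=\frac1M\sum_{j=1}^M\frac{d}{d\phi}p(y_i|T_\phi(u_j))$ and $B_{i,M}=\frac1M\sum_{j=1}^M p(y_i|T_\phi(u_j))$. Since $u_1,\dots,u_M$ are i.i.d., the weak law of large numbers (applied componentwise to the vector $A_{i,M}$) gives $A_{i,M}\overset{p}{\to}\E_u[\frac{d}{d\phi}p(y_i|T_\phi(u))]$ and $B_{i,M}\overset{p}{\to}Z_i(\phi)$, provided both summands have finite first moments. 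Because $Z_i(\phi)>0$, the map $(a,b)\mapsto a/b$ is continuous at the limit point, so the continuous mapping theorem (equivalently, Slutsky's lemma) yields $A_{i,M}/B_{i,M}\overset{p}{\to}\E_u[\frac{d}{d\phi}p(y_i|T_\phi(u))]/Z_i(\phi)$. Finally, since $n$ is fixed, a finite sum of sequences each converging in probability converges in probability to the sum of the limits, which is exactly $\frac{d}{d\phi}\mathcal{L}^{\log}(\phi)$.

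I expect the main obstacle to be the two regularity issues underpinning the first display: justifying the interchange of $\frac{d}{d\phi}$ and $\E_u$, and securing the finite first moment needed for the law of large numbers. Both can be handled by exhibiting an integrable dominating envelope for $\frac{d}{d\phi}p(y_i|T_\phi(u))$ on a neighborhood of $\phi$ and invoking the dominated convergence theorem (the standard differentiation-under-the-integral argument). A secondary but milder point is that the argument is pointwise in $\phi$: one only needs $Z_i(\phi)>0$ at the fixed $\phi$ under consideration, which holds automatically, so no uniform-in-$\phi$ control is required for this proposition.
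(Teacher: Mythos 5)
Your proof follows essentially the same route as the paper's: decompose each summand of the estimator into a ratio of sample averages $A_{i,M}/B_{i,M}$, apply the law of large numbers to numerator and denominator, and conclude via Slutsky's theorem (continuous mapping), summing over the fixed $n$ terms. The only difference is that you spell out the regularity conditions (differentiation under the integral, finite first moments, positivity of $Z_i(\phi)$) that the paper subsumes under the phrase ``under regularity conditions,'' which is a welcome refinement rather than a different argument.
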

\begin{proof}
    Define $\theta_j=T_{\phi}(u_j)$, $A_M=\frac{1}{M}\sum_{j=1}^M \frac{d }{d \phi}p(y|\theta_j)$, $B_M=\frac{1}{M}\sum_{j=1}^M p(y|\theta_j)$. To prove the convergence, it is enough to show for any $y$,
    \begin{align}
        \frac{A_M}{B_M}\overset{p}{\to}\frac{d}{d\phi}\log \int_\Theta p(y|\theta) q_{\phi}(\theta) d\theta.\notag
    \end{align}
     Note $A_M\overset{p}{\to}\int_\Theta \frac{d}{d\phi}p(y|\theta) q_{\phi}(\theta) d\theta$ and $B_M\overset{p}{\to} \int_\Theta p(y|\theta) q_{\phi}(\theta) d\theta$ under regularity conditions. By Slutsky's theorem, we have that $A_M/B_M\overset{p}{\to}\frac{d}{d\phi}\log \int_\Theta p(y|\theta) q_{\phi}(\theta) d\theta$.
\end{proof}
Note the gradient estimator has the same form as a self-normalized importance sampling estimator, which converges almost surely. The bias of a self-normalized importance sampling estimator is also known to converge to zero approximately at the rate of $\mathcal{O}(M^{-1})$. See \citet{mcbook} for more details. 
\subsection{Convergence of the quadratic score gradient}
For the quadratic score
\begin{align}
    \mathcal{L}^{\mathrm{quad}}(\phi)=\sum_{i=1}^n 2\int_\Theta f(\theta,y_i)q_{\phi}(\theta)d\theta-\sum_{j=1}^I\left(\int_\Theta f(\theta,j)q_{\phi}(\theta)d\theta\right)^2,\notag
\end{align}
we have a gradient estimator
\begin{align}
    g^{\mathrm{quad}}_M(\phi)=2\sum_{i=1}^n\frac{1}{M}\sum_{j=1}^M \frac{d}{d\phi}f(T_{\phi}(u_j),y_i)-\sum_{j=1}^I\left( \frac{1}{M}\sum_{k=1}^M f(T_{\phi}(u_k),j)\right)\left(\frac{1}{M}\sum_{k=1}^M \frac{d f(T_{\phi}(u_k),j)}{d\phi}\right). \notag
\end{align}
We have the same result of converging in probability.
\begin{proposition}
As $M\to\infty$, $ g_M^{\mathrm{quad}}(\phi)\overset{p}{\to}\frac{d}{d\phi}\mathcal{L}^{\mathrm{quad}}(\phi)$.
\end{proposition}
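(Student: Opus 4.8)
The plan is to mirror the convergence argument just given for the logarithmic-score gradient. The estimator $g_M^{\mathrm{quad}}(\phi)$ is assembled from finitely many Monte Carlo averages of the form $\frac{1}{M}\sum_{k}h(T_\phi(u_k))$ with $u_k$ iid from $q$, combined through finite sums over $i\in\{1,\dots,n\}$ and $j\in\{1,\dots,I\}$ and one product per $j$. So I would reduce the whole claim to two ingredients: (i) the weak law of large numbers, giving convergence in probability of each individual average to the integral it estimates, and (ii) the continuous mapping theorem (the multivariate version of the Slutsky step used in the log-score proof), which propagates these convergences through the finitely many sums and products.

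First I would handle the linear term. For each fixed $i$, reparameterization writes $\frac{d}{d\phi}\int f(\theta,y_i)q_\phi(\theta)d\theta=\int \frac{d}{d\phi}f(T_\phi(u),y_i)\,q(u)\,du$, and the weak law of large numbers gives $\frac{1}{M}\sum_{j}\frac{d}{d\phi}f(T_\phi(u_j),y_i)\overset{p}{\to}\int \frac{d}{d\phi}f(T_\phi(u),y_i)\,q(u)\,du$, provided the summand has a finite first moment. Next, for the quadratic term, fix $j$ and set $C_M^j=\frac{1}{M}\sum_k f(T_\phi(u_k),j)$ and $D_M^j=\frac{1}{M}\sum_k \frac{d}{d\phi}f(T_\phi(u_k),j)$. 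The same law of large numbers gives $C_M^j\overset{p}{\to}C^j:=\int f(\theta,j)q_\phi(\theta)d\theta$ and $D_M^j\overset{p}{\to}D^j:=\frac{d}{d\phi}\int f(\theta,j)q_\phi(\theta)d\theta$. Because both limits are constants, marginal convergence implies joint convergence $(C_M^j,D_M^j)\overset{p}{\to}(C^j,D^j)$, and since $(c,d)\mapsto cd$ is continuous the continuous mapping theorem yields $C_M^j D_M^j\overset{p}{\to}C^jD^j$, which is the building block (through the chain rule on the squared term) of the gradient of the quadratic part of $\mathcal{L}^{\mathrm{quad}}$. Summing over the finitely many $i$ and $j$ — finite sums preserve convergence in probability — then delivers $g_M^{\mathrm{quad}}(\phi)\overset{p}{\to}\frac{d}{d\phi}\mathcal{L}^{\mathrm{quad}}(\phi)$.

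The main subtlety I would flag is the product $C_M^j D_M^j$: it reuses the same draws $u_1,\dots,u_M$ in both factors, so the two averages are dependent and $\E[C_M^j D_M^j]\neq C^jD^j$, making the estimator biased at any finite $M$. The point to make explicit is that consistency does not require unbiasedness — convergence in probability to constant limits is preserved under the continuous product map no matter how strongly $C_M^j$ and $D_M^j$ are correlated, so the finite-$M$ bias is irrelevant to the limit. The only genuinely technical conditions are the \emph{regularity conditions} invoked implicitly: finite first moments so that each law of large numbers applies, and an integrable dominating envelope on $\frac{d}{d\phi}f(T_\phi(u),\cdot)$ in a neighborhood of $\phi$ so that differentiation may be interchanged with the integral against $q(u)\,du$. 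I would state these explicitly and otherwise treat the argument as a direct transcription of the log-score proof.
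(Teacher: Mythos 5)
Your proposal is correct and follows essentially the same route as the paper's proof: decompose the estimator into the Monte Carlo averages $B_{M,j}$ (of $f$) and $A_{M,j}$ (of $df/d\phi$), apply the law of large numbers to each, and pass the product through Slutsky's theorem (your continuous-mapping step), with finite sums over $i$ and $j$ preserving convergence in probability. Your explicit remarks that marginal convergence to constants yields joint convergence, and that the finite-$M$ bias from reusing the same draws in both factors is irrelevant for consistency, are careful refinements of what the paper compresses into ``under regularity conditions'' and ``by Slutsky's theorem.''
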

\begin{proof}
    Define $\theta_k=T_{\phi}(u_k)$, $A_{M,j}=\frac{1}{M}\sum_{k=1}^M \frac{d }{d \phi}f(\theta_k,j)$, $B_{M,j}=\frac{1}{M}\sum_{k=1}^M f(\theta_k,j)$. 
     Note for any $j$, $A_{M,j}\overset{p}{\to} \frac{d}{d\phi}\int_\Theta f(\theta,j) q_{\phi}(\theta) d\theta$ and $B_{M,j}\overset{p}{\to} \int_\Theta f(\theta,j) q_{\phi}(\theta) d\theta$ under regularity conditions. Then, for a single $y$
     \begin{align}
        &\frac{d}{d\phi}\left(2\int_\Theta f(\theta,y)q_{\phi}(\theta)d\theta-\sum_{j=1}^I\left(\int_\Theta f(\theta,j)q_{\phi}(\theta)d\theta\right)^2\right)\notag\\
        =&2\frac{d}{d\phi}\int_\Theta f(\theta,y)q_{\phi}(\theta)d\theta-2\sum_{j=1}^I\left(\int_\Theta f(\theta,j)q_{\phi}(\theta)d\theta\right)\frac{d}{d\phi}\left(\int_\Theta f(\theta,j)q_{\phi}(\theta)d\theta\right)\notag\\
        \approx&2A_{M,y}-2\sum_{j=1}^IB_{M,j}A_{M,j}.\notag
     \end{align}
     By Slutsky's theorem, we have that $2A_{M,y}-2\sum_{j=1}^IB_{M,j}A_{M,j}$ converges in probability for a single $y$. Thus the constructed gradient estimator also converges in probability.
\end{proof}
\subsection{Convergence of the interval score gradient}
For the interval score objective
\begin{align}
\mathcal{L}^{\mathrm{IS}}(\phi)=-\sum_{i=1}^n(U_{\alpha}-L_{\alpha})+\frac{2}{\alpha}(L_{\alpha}-y_i)\mathbb{I}(y_i<L_{\alpha})+\frac{2}{\alpha}(y_i-U_{\alpha})\mathbb{I}(y_i>U_{\alpha}),\end{align}
the gradient estimator is
\begin{align}g^{\mathrm{IS}}_M(\phi)=\sum_{i=1}^n&\frac{\partial \hat{U}_{\alpha}}{\partial \phi}\left(\frac{2}{\alpha}\mathbb{I}(y_i>\hat{U}_{\alpha})-1\right)-\frac{\partial \hat{L}_{\alpha}}{\partial \phi}\left(\frac{2}{\alpha}\mathbb{I}(y_i<\hat{L}_{\alpha})-1\right),\end{align}
where $\hat{L}_\alpha$ and $\hat{U}_\alpha$ are estimated from $M$ samples $y_1^{\mathrm{sim}}, y_2^{\mathrm{sim}},..., y_M^{\mathrm{sim}}$ from $\int_\Theta p(y|\theta)q_\phi(\theta)d\theta$. We show that the gradient estimator is convergent.
\begin{proposition}
 \label{prop:IS_convergence}
As $M\to\infty$, $ g_M^{\mathrm{IS}}(\phi)\overset{p}{\to}\frac{d}{d\phi}\mathcal{L}^{\mathrm{IS}}(\phi)$.
\end{proposition}
\begin{proof}
    There are two parts of the proof. First, $\frac{\partial \hat{L}_{\alpha}}{\partial \phi}$ and $\frac{\partial \hat{U}_{\alpha}}{\partial \phi}$ are obtained from autodiff. We show they are consistent. Consider the empirical quantile function $\hat{Q}_\gamma(\phi)$ for the $M$ simulated samples. For any $\phi$, by Glivenko-Cantelli theorem, $\hat{Q}_\gamma(\phi)\to Q_\gamma(\phi)$ almost surely. Under regularity conditions, with Attouch theorem, $\partial\hat{Q}_\gamma(\phi)$ converges graphically to $\partial Q_\gamma(\phi)$, which has a unique element $\frac{\partial Q_\gamma(\phi)}{\partial\phi} $. By applying this with $\gamma=\alpha/2$ and $\gamma=1-\alpha/2$, we have both $\frac{\partial \hat{L}_{\alpha}}{\partial \phi}$ and $\frac{\partial \hat{U}_{\alpha}}{\partial \phi}$ are convergent. 

    Next, $\mathbb{I}(y_i>\hat{U}_{\alpha})$ and $\mathbb{I}(y_i<\hat{L}_{\alpha})$ converge in probability to the true indicator functions $\mathbb{I}(y_i>U_{\alpha})$ and $\mathbb{I}(y_i<L_{\alpha})$. So the combination of convergent gradient estimators converges in probability. This concludes the proof.
\end{proof}

\subsection{Unbiasedness of the CRPS gradient}
For the CRPS objective
\begin{align}
    \mathcal{L}^{\mathrm{CRPS}}(\phi)=-\sum_{i=1}^n\E_{y^{\mathrm{sim}}\sim\int_\Theta p(y|\theta)q_{\phi}(\theta)d\theta}  \left[  |y^{\mathrm{sim}}-y_i|\right] + \frac{1}{2}\E_{y^{\mathrm{sim}}_1, y^{\mathrm{sim}}_2\sim\int_\Theta p(y|\theta)q_{\phi}(\theta)d\theta}\left[  |y^{\mathrm{sim}}_1- y^{\mathrm{sim}}_2|  \right],\notag
\end{align}
we have an unbiased estimator, assuming $y^{\mathrm{sim}}$ can be generated with reparameterization of $y^{\mathrm{sim}}=\mathcal{T}_\phi(\epsilon)$, $\epsilon\sim q(\cdot)$. Such reparameterization is performed at both the variational distribution $q_\phi$ and the simulator $p(y|\theta)$. Then for an arbitrary $y$,
\begin{align}
    \frac{d}{d\phi}\E_{y^{\mathrm{sim}}\sim\int_\Theta p(y|\theta)q_{\phi}(\theta)d\theta}  \left[  |y^{\mathrm{sim}}-y_i|\right]&=\E_{\epsilon\sim q(\cdot)}  \left[  \frac{d}{d\phi}\left|\mathcal{T}_\phi(\epsilon)-y\right|\right]\notag\\
    &\approx\frac{1}{2M}\sum_{m=1}^{2M}\mathrm{sign}(\mathcal{T}_\phi(\epsilon_m)-y) \frac{d}{d\phi}\mathcal{T}_\phi(\epsilon_m)\notag\\
    &=\frac{1}{2M}\sum_{m=1}^{2M}\mathrm{sign}(y^{\mathrm{sim}}_m-y) \frac{d}{d\phi}y^{\mathrm{sim}}_m\notag.
\end{align}
Similarly, we have
\begin{align}
    \frac{d}{d\phi}\E_{y^{\mathrm{sim}}_1, y^{\mathrm{sim}}_2\sim\int_\Theta p(y|\theta)q_{\phi}(\theta)d\theta}\left[  |y^{\mathrm{sim}}_1- y^{\mathrm{sim}}_2|  \right]&= \E_{\epsilon_a,\epsilon_b\sim q(\cdot)}\left[  \frac{d}{d\phi}|\mathcal{T}_\phi(\epsilon_a)-\mathcal{T}_\phi(\epsilon_b)|  \right]\notag\\
    &\approx\frac{1}{M}\sum_{m=1}^M\mathrm{sign}(\mathcal{T}_\phi(\epsilon_
    {m})-\mathcal{T}_\phi(\epsilon_{m+M}))\frac{d}{d\phi}(\mathcal{T}_\phi(\epsilon_m)-\mathcal{T}_\phi(\epsilon_{m+M}))\notag\\
    &=\frac{1}{M}\sum_{m=1}^M\mathrm{sign}(y^{\mathrm{sim}}_m-y^{\mathrm{sim}}_{m+M})\frac{d}{d\phi}(y^{\mathrm{sim}}_m-y^{\mathrm{sim}}_{m+M}).\notag
\end{align}
Both Monte Carlo estimators are unbiased and we directly have an unbiased gradient estimator for CRPS:
\begin{align}
    g^{\mathrm{CRPS}}(\phi)= -\frac{1}{2M} \sum_{i=1}^n\sum_{m=1}^{2M} \left( g_m \cdot \mathrm{sign}( y^{\mathrm{sim}}_m-y_i)\right)
             +\frac{1}{2M} \sum_{m=1}^M  \left((g_m- g_{m+M})  \cdot  \mathrm{sign}( y^{\mathrm{sim}}_m-  y^{\mathrm{sim}}_{m+M})\right),\notag
\end{align}
where $g_m=\frac{d}{d\phi} y^{\mathrm{sim}}_m$.

\section{Additional Results and Experiments}\label{additional}
We explore further properties of PVI through three simulated regression models, an example in linear regression, and additional benchmarks.
\subsection{Interpolation between VI and PVI}
\label{sec:interpolation}
\begin{figure*}
    \centering
    \includegraphics[width=0.25\linewidth]{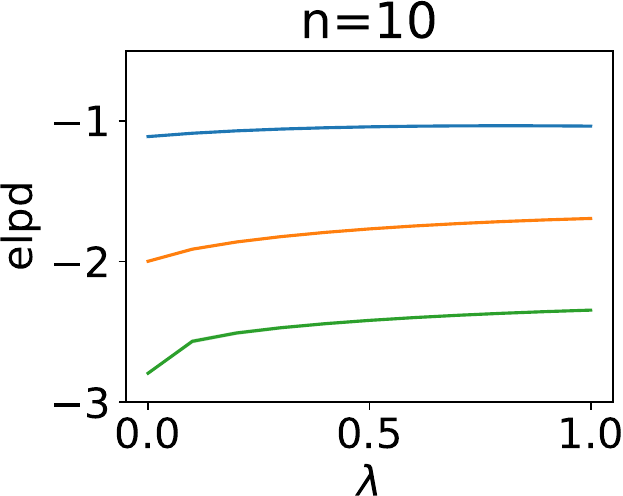}
    \includegraphics[width=0.20\linewidth]{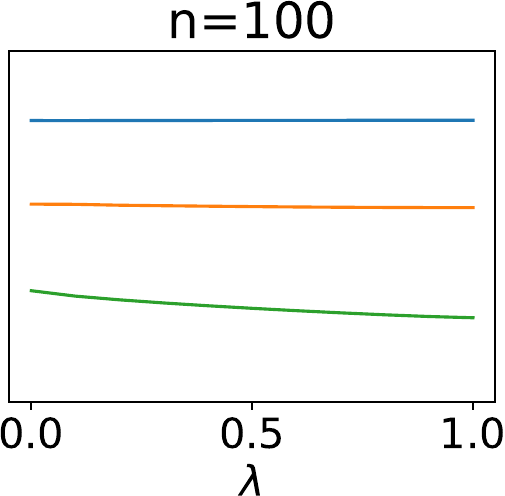}
    \includegraphics[width=0.20\linewidth]{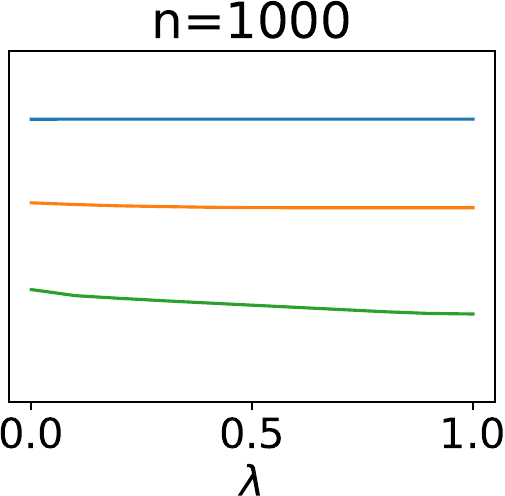}
    \includegraphics[width=0.29\linewidth]{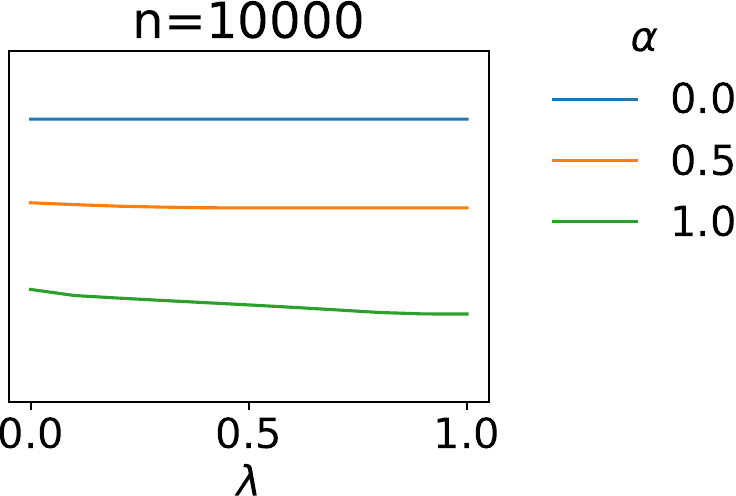}
    \caption{Test average elpd on a simple regression model, with different dataset sizes $n\in\{10, 100, 1000, 10000\}$ and specification scales $\alpha$ ($\alpha=0$ for well-specified and $\alpha=1$ for misspecified), interpolating between PVI ($\lambda=0$) and VI ($\lambda=1$). It is found that VI gives a better finite sample performance but PVI works better asymptotically.}
    \label{fig:regression_regularization}
\end{figure*}
Posterior regularization allows us to interpolate between VI and PVI. We expect that VI has good finite sample performance because it incorporates the prior while PVI has better performance asymptotically when the model is misspecified. We consider the regression problem:
\begin{align}
    y_i=\x^T_i\BB+\epsilon,\ \epsilon\sim\text{Normal}(0,1),\notag
\end{align}
where $y_i\in\R$, $\x_i\in\R^{d}$, $\BB\in\R^d$ for datapoint $i\in\{1,2,...,n\}$. In the data generating process, we specify a parameter $\alpha$ to interpolate between a well-specified model ($\alpha=0$) and a misspecified model ($\alpha=1$). We generate a collection of parameters $\{\BB_0,\BB_1,...,\BB_g\}$ and choose $\BB$ from $\{(1-\alpha)\BB_0+\alpha\BB_j\mid j=1,2,...,g\}$ when generating each datapoint. The parameters are learned by a combined loss of VI and PVI. Namely, we maximize
\begin{align}
    \mathcal{L}_{\lambda}(\phi)=\lambda \mathcal{L}_{\mathrm{VI}}(\phi) +(1-\lambda) \mathcal{L}_{\mathrm{PVI}}(\phi).\notag
\end{align}
In Figure \ref{fig:regression_regularization}, the average elpd on the test set is plotted under different settings. We first focus on the dataset size. When $n=10$, we note that the full PVI gives worse test predictive scores. This is reasonable as the learned parameters from PVI are entirely determined by limited number of samples, while VI is regularized by the prior. As the sample size $n$ increases, PVI outperforms VI since it directly targets the prediction. We also note that on the correctly specified model $\alpha=0$, PVI and VI are asymptotically equivalent. In practice, it is reasonable to have a combined objective of PVI and VI (which we call PVI with posterior regularization) for robust performance under different sample sizes and different models.
\subsection{Likelihood-free regression}\label{sec:regression}
\begin{figure*}
    \centering
    \includegraphics[width=0.27\linewidth]{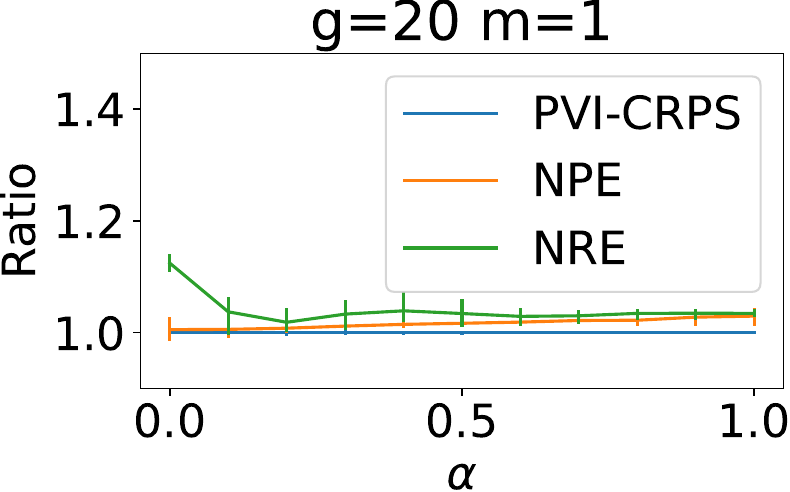}
    \includegraphics[width=0.22\linewidth]{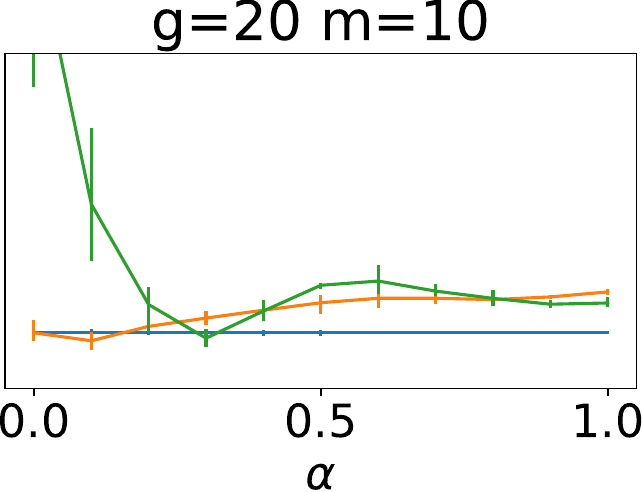}
    \includegraphics[width=0.22\linewidth]{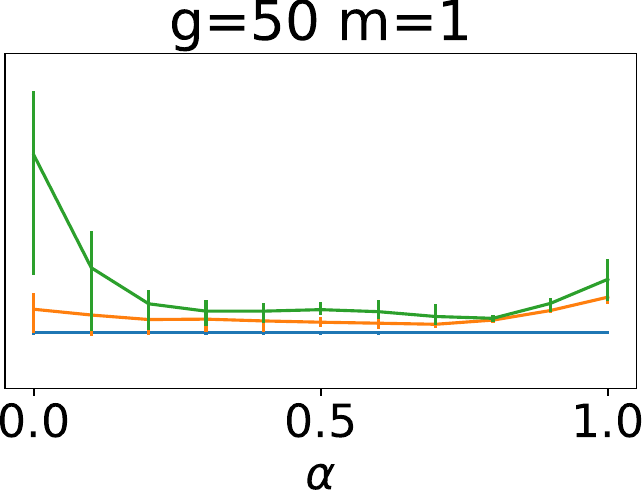}
    \includegraphics[width=0.22\linewidth]{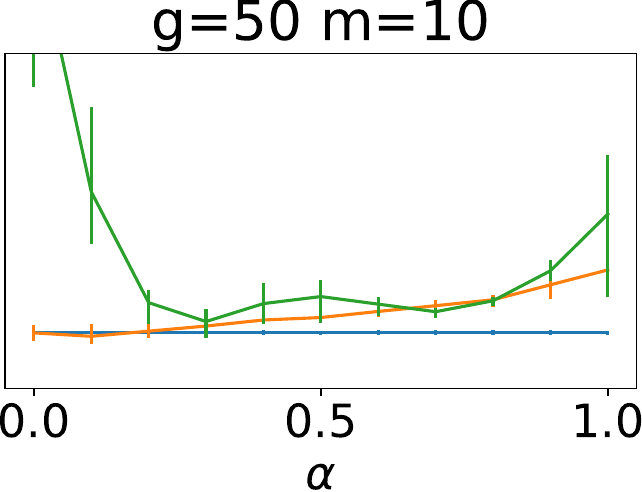}
    \caption{Ratios of test CRPS with respect to the test CRPS of PVI, with different parameters of the $y^2$ regression model (group number $g$, observation dimension $m$, misspecification scale $\alpha$). We compare against NPE and NRE from SBI, showing that PVI is more favored especially when the model is misspecified ($\alpha\to 1$). }
    \label{fig:regression}
\end{figure*}
In regression problems, sometimes we are only able to access the summary statistics of groups of outcomes. This scenario is common in differential privacy and meta-analysis. We consider the same regression problem with more dimensions in the observation:
\begin{align}
    \y_i=\X_i\BB+\EE,\ \EE\sim\MVN(\zero,\I),\notag
\end{align}
where $\y_i\in\R^{m}$, $\X_i\in\R^{m\times d}$, $\BB\in\R^d$ for datapoint $i\in\{1,2,...,n\}$. However, instead of the full vector $\y_i$, we only have access to $\widehat{y_i^2}\coloneqq \sum_{j=1}^my_{i,j}^2$. The likelihood of $\widehat{y_i^2}$ follows non-central chi-squared distribution, whose probability density function involves an infinite summation. What's worse, the model may be misspecified. In the data generating process, we specify a parameter $\alpha$ to interpolate between a well-specified model ($\alpha=0$) and a misspecified model ($\alpha=1$). We generate a collection of parameters $\{\BB_0,\BB_1,...,\BB_g\}$ and choose $\BB$ from $\{(1-\alpha)\BB_0+\alpha\BB_j\mid j=1,2,...,g\}$ for each datapoint. We compare PVI with CRPS against the standard implementations of neural posterior estimation (NPE) \citep{papamakarios2016fast} and neural ratio estimation (NRE) \citep{hermans2020likelihood} from SBI \citep{tejero-cantero2020sbi}. Due to the Bayesian nature and the misspecification of model their sequential counterparts perform worse.

First, we compare the performances of different methods with different interpolation parameters $\alpha$. We find that PVI with CRPS has a robust performance under different levels of misspecification. In addition, the performance gain is larger with more dimensions in observations (higher $m$). Note that the performance of NRE is good only with a modest level of misspecification. The good performance of NRE under modestly misspecified case aligns with the findings in \citet{cannon2022investigating}, but we also show that eventually NRE can be as bad as NPE, as the problem of misspecification worsens. 
\subsection{Properties of different scoring rules}
In theory, as $n\to\infty$, PVI will converge to the optimal parameter regardless of the scoring rule. In practice, different scoring rules have different finite sample behaviors and properties. We demonstrate the differences with a simple regression model. Suppose we have a true model $\y_i=\x_i^T\BB +\epsilon$ where $\epsilon\sim t_{\mathrm{df}}$. The model is misspecified in two senses: there are multiple $\BB$s to generate data, and we use a normal model during inference. PVI is capable of handling these challenges. We find that the log score leads to the best test log likelihood. More interestingly, different scores lead to different test coverages. When training with IS at the confidence level of $\alpha=0.1$, we consistently get correct test coverage that is not achieved by the other scoring rules. See the first panel of Figure \ref{fig:coverage_is}. This result demonstrates that PVI can potentially provide free conformal prediction with proper settings of the scoring rule. Next, we study the robustness of different scoring rules. During the data generating process, we insert noisy data of different scales into the training set. This mimics the real-world challenge of working with contaminated data. We first insert $4\%$ of outliers into the training set and vary the scale of outliers. The test log likelihoods of different methods are in the second panel of Figure \ref{fig:coverage_is}. IS at the significance level of $\alpha=0.1$ is the most robust to this type of contamination, followed by CRPS. This is because both IS and CRPS care about the calibration of the predictive distribution. Next, we keep the scale of outliers and vary the number of outliers. When there are $16\%$ contaminated training data, the performance of IS with $\alpha=0.1$ becomes much worse, as shown in Figure \ref{fig:coverage_is}. CRPS is found to be the most robust to different sizes of contaminated data. This is because IS only works for a single significance level, while CRPS focuses on the global calibration at different locations.

\newcommand{\height}{0.22\linewidth}

\begin{figure}[t]
    \centering
    \includegraphics[height=0.22\linewidth]{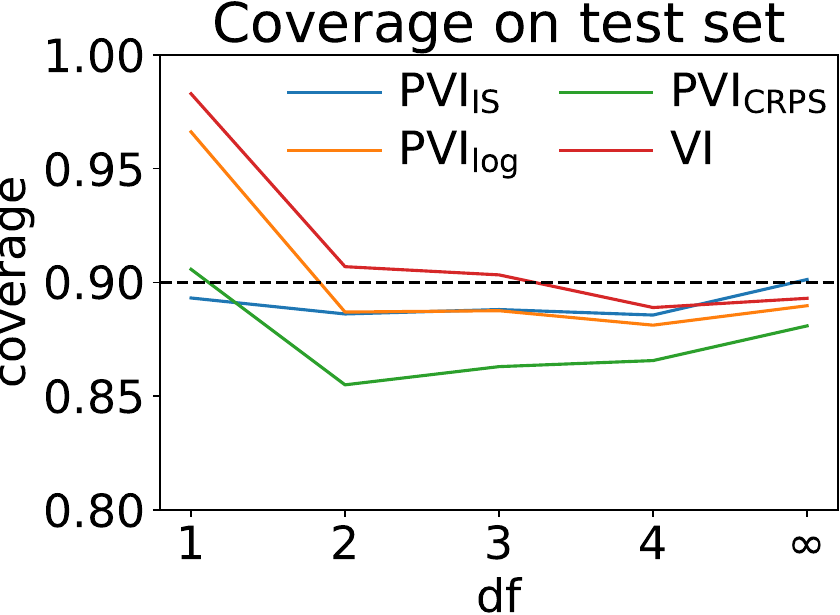}
    \includegraphics[height=\height]{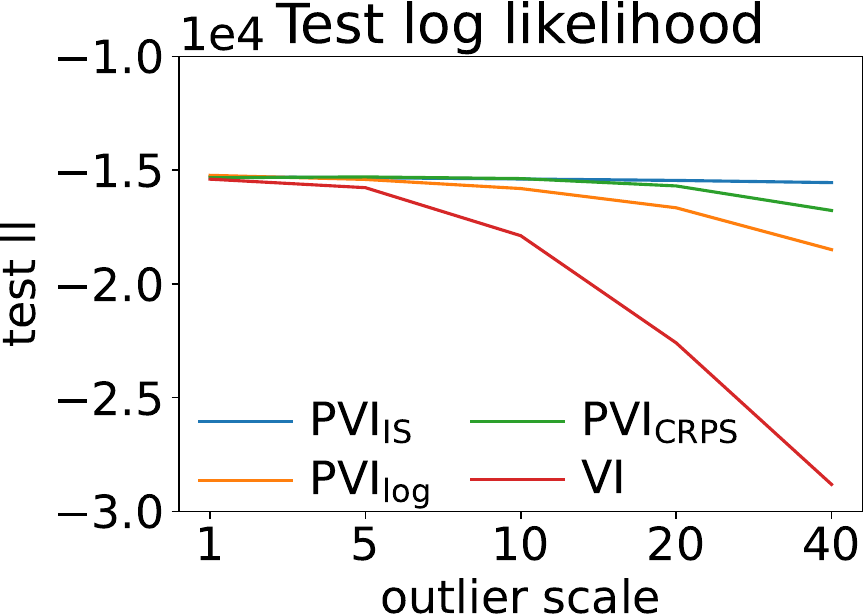}
        \includegraphics[height=\height]{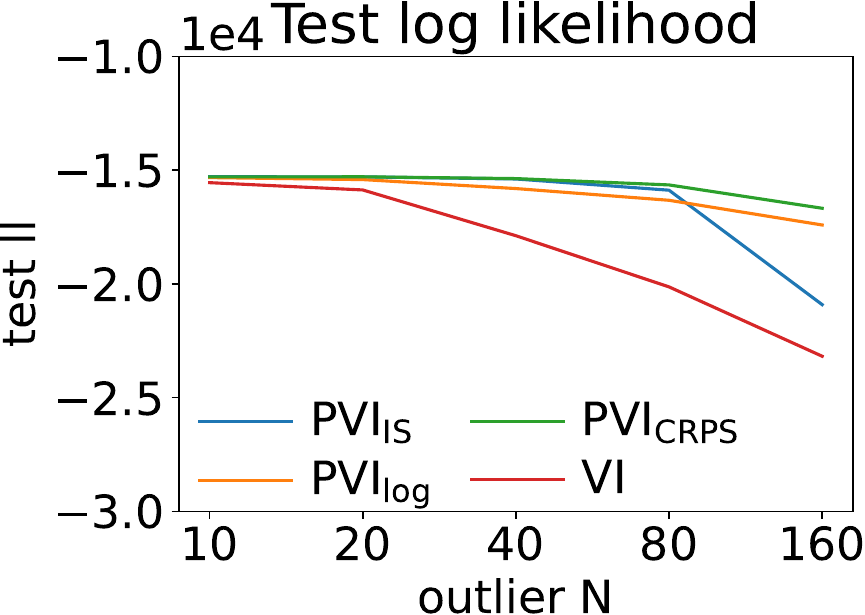}

    \caption{Comparing different scores with a simulated regression model. Left: test coverage from PVI with different objectives with different Student-t noise models. Middle: test log likelihood from PVI with different objectives after mixing training data with $40$ outliers of different scales. Right: test log likelihood from PVI with different objectives after mixing training data with different numbers of outliers of scale $10$. In these experiments $n=1000$. IS is trained with the significance level of $\alpha=0.1$.}
    \label{fig:coverage_is}
\end{figure}

\subsection{The expressiveness of PVI}
Another interesting view is that PVI expands the expressiveness of the underlying models by implicitly adding a layer of trainable parameters. 
We demonstrate this with an example in linear regression. Suppose our model is $$y|x\sim \n(wx+b, \sigma),$$ and we have the variational distribution $q_\phi(\theta)$ where $\theta=[w,b,\sigma^2]$. If we assign a distribution over $b$ and $\sigma$, the noise distribution is expanded to be a continuous mixture of normal distributions. PVI subsumes this case, which means if the posterior distribution is flexible enough, PVI can model arbitrary noise distributions for linear regression. This property can be generalized to other models with separable noise. Moreover, PVI also defines a distribution over $w$. In the general case, we have
\begin{align}
    \mathbb{E}_{\int p(y|x,\theta)q_{\phi}(\theta)d\theta}[y]&=\int \E_{p(y|x,\theta)}[y]q_\phi(\theta)d\theta\notag\\
    &=\E_{q_\phi(\theta)}[w]x+\E_{q_\phi(\theta)}[b],\notag\\
    \mathrm{Var}_{\int p(y|x,\theta)q_{\phi}(\theta)d\theta}[y]&=\Var_{q_\phi(\theta)}[\E_{p(y|x,\theta)}[y]]+\E_{q_\phi(\theta)}[\Var_{p(y|x,\theta)}[y]]\notag\\
    &=\Var_{q_\phi(\theta)}[wx+b]+\E_{q_\phi(\theta)}[\sigma^2]\notag\\
    &=\Var_{q_\phi(\theta)}[w]x^2+2\mathrm{Cov}_{q_\phi(\theta)}(w,b)x+\Var_{q_\phi(\theta)}[b]+\E_{q_\phi(\sigma)}[\sigma^2].\notag
\end{align}
Although the resulting model still has a linear mean, the noise distribution can vary by location $x$, which is more expressive than models with stationary noise. %

\subsection{Benchmarking Stan models}
\label{sec:stan}
We test PVI against vanilla VI on a collection of models from \texttt{posteriordb} \citep{magnusson2024posteriordb} and evaluate their held-out data prediction performance. For each model, we use the same $M=100$ Monte Carlo batch size. We perform mild hyperparameter tuning for PVI (regularizer $r\in\{r^{\mathrm{prior}}, r^{\mathrm{post}}\}$, $\lambda\in\{0, 0.01,0.1,1\}$) by cross-validation. 
We consider two variational families, Gaussian with diagonal and dense covariance matrices, and compare each of the test scores on held-out data when possible. The results of both cases are shown in Table \ref{tab:stan}. We find that in general PVI always improves the prediction performance of the corresponding score on held-out data, a free model improvement. There are cases where PVI and VI are close (e.g., the election model and the radon model), which indicates that the model is nearly well-specified. 
Indeed, the election model here is saturated and passes the posterior predictive check as well.

We studied the effect of regularization in this experiment. Across the models, we did not find a clear pattern for whether the prior or posterior regularization works better (Table \ref{tab:stan2}). We view this extra binary tuning as added flexibility. For completeness, we also add results using Hamiltonian Monte Carlo (NUTS) with the exact posterior as another alternative. This additional experiment is shown as Table \ref{tab:stan_nuts}. We do not see a qualitative jump in performance from VI to NUTS. 
\begin{table*}[t]
    \centering
    \tiny
    \begin{tabular}{cccccccccc}
    \toprule
         \multirow{2}{*}{Model}&\multirow{2}{*}{Method}&  \multicolumn{4}{c}{Diagonal covariance}&  \multicolumn{4}{c}{Dense covariance}\\ \cmidrule(lr){3-6} \cmidrule(lr){7-10} 
         &&Log ($\uparrow$)&Quad ($\uparrow$)&CRPS ($\downarrow$)&IS  ($\times 10^3$) ($\downarrow$)&Log ($\uparrow$)&Quad ($\uparrow$)&CRPS ($\downarrow$)&IS  ($\times 10^3$) ($\downarrow$)\\
         \midrule
         \multirow{3}{*}{election}&VI&-1494.44 (0.01)&0.21 (0.00)&-&-&-1493.80 (0.16)&0.21 (0.00)&-&-\\
         &PVI-Log&\textbf{-1490.89} (0.01)&0.21 (0.00)&-&-&\textbf{-1493.64} (0.09)&0.20 (0.00)&-&-\\
         &PVI-Quad&-1603.60 (1.96)& \textbf{0.25} (0.00)&-&-&-1603.64 (2.04)& \textbf{0.25} (0.00)&-&-\\
        \midrule
         \multirow{2}{*}{glmm}&VI&-1159.49 (0.60)&-&-&-&-1161.63 (0.81)&-&-&-\\
         &PVI-Log&\textbf{-1150.21} (0.09)&-&-&-&\textbf{-1149.00} (0.58)&-&-&-\\
         \midrule
         \multirow{4}{*}{earnings}&VI&-298.12 (0.05)&-&111.51 (0.03)&229.89 (0.12)&-298.18 (0.07)&-& 111.60 (0.04)&229.94 (0.09)\\
         &PVI-Log&\textbf{-288.69} (0.01)&-&\textbf{108.49} (0.00)&233.88 (0.08)&\textbf{-283.42} (0.60)&-&109.08 (0.17)&232.43 (0.53)\\
         &PVI-CRPS&-291.54 (0.12) &-&109.55 (0.03)&232.72 (0.18)&-284.29 (0.29) &-&\textbf{108.78} (0.07)&231.98 (0.61)\\
         &PVI-IS&-1046.23 (897.24)&-& 125.89 (0.60)& \textbf{228.13} (3.64)&-310.03 (2.82)&-& 121.44 (1.76)&\textbf{229.32} (4.41)\\
         \midrule
         \multirow{4}{*}{kidscore}&VI& -590.67 (0.08)&-&4481.15 (0.59)&3664.73 (1.36)& -590.71 (0.07)&-& 4480.55 (0.50)&3664.71 (1.11)\\
         &PVI-Log&\textbf{-369.91} (0.04)&-&\textbf{872.82} (0.60)&778.14 (0.75)& -371.06 (0.11)&-&871.84 (0.93)&771.42 (1.76)\\
         &PVI-CRPS&-370.27 (0.05)&-&875.74 (0.56)&752.10 (1.33)&\textbf{-370.51} (0.16) &-&\textbf{867.66} (1.01)&764.86 (1.79)\\
         &PVI-IS&-374.63 (0.05)&-& 925.96 (0.64)&\textbf{717.11} (0.43)&-382.05 (0.14)&-&1001.33 (1.05)& \textbf{722.39} (0.30)\\
         \midrule
         \multirow{4}{*}{nes}&VI&-185.71 (0.01)&-&91.42 (0.01)&94.83 (0.03)&-185.71 (0.03)&-&91.49 (0.02)&94.79 (0.15)\\
         &PVI-Log&\textbf{-185.60} (0.09)&-& 91.02 (0.01)&96.90 (0.82)&\textbf{-184.87} (0.02)&-&90.70 (0.08)&96.98 (0.16)\\
         &PVI-CRPS&-187.15 (0.76)&-& \textbf{87.81} (0.09)&102.81 (1.05)&-187.91 (0.66)&- &\textbf{89.75} (0.53)&96.21 (2.75)\\
         &PVI-IS&-226.56 (22.08)&-& 110.31 (0.25)& \textbf{65.37} (0.70)&-194.99 (0.16)&-& 104.42 (0.26) &\textbf{68.41} (0.31)\\
         \midrule
         \multirow{4}{*}{radon}&VI&-3412.90 (0.11)&-&1338.01 (0.09)&\textbf{35635.36} (40.07)&\textbf{-3411.69} (0.09)&-&1337.71 (0.05)&\textbf{36044.18} (30.63)\\
         &PVI-Log&\textbf{-3372.10} (0.08)&-&1333.02 (0.03)&41141.75 (34.70)& -3415.79 (0.21)&-& 1344.73 (0.16)&37498.42 (111.74)\\
         &PVI-CRPS&-3482.39 (2.41) &-&\textbf{1323.71 } (0.30)&39140.21 (45.48)&-3495.78 (18.63) &-&\textbf{1330.10 }(5.81)&39010.02 (124.78)\\
         &PVI-IS&-3444.50 (0.49)&-& 1363.77 (0.09)& 37070.83 (57.80)&-3487.35 (1.21)&-& 1384.58 (0.63)& 37367.99 (32.33)\\
         \midrule
         \multirow{3}{*}{wells}&VI&-393.64 (0.04) &0.18 (0.00)&-&-&-393.68 (0.01)& 0.18 (0.00)&-&-\\
         &PVI-Log&\textbf{-392.14} (0.02)&0.18 (0.00)&-&-&\textbf{-391.02} (0.02)& 0.18 (0.00)&-&-\\
         &PVI-Quad& -415.39 (0.77)&\textbf{0.27} (0.01)&-&-& -415.40 (0.77)& \textbf{0.27} (0.01)&-&-\\
         \bottomrule
    \end{tabular}
    \caption{Test predictive score (mean and std from 5 random seeds) on 7 models from posteriordb with VI or PVI. We show that PVI is better in general when the prediction is evaluated in the corresponding scoring rule.}
    \label{tab:stan}
\end{table*}

\begin{table}[t]
    \centering
    \small
    \begin{tabular}{cccccc}
    \toprule
         \multirow{2}{*}{Model}&\multirow{2}{*}{Method}&  \multicolumn{2}{c}{Diagonal covariance}&  \multicolumn{2}{c}{Dense covariance}\\ \cmidrule(lr){3-4} \cmidrule(lr){5-6} 
         &&$r$&$\lambda$&$r$&$\lambda$\\
         \midrule
         \multirow{2}{*}{election}&PVI-Log&$r^\text{post}$&0.01&$r^\text{post}$&0.01\\
         &PVI-Quad&$r^\text{prior}$&1.0&$r^\text{prior}$&1.0\\
        \midrule
         \multirow{1}{*}{glmm}&PVI-Log&$r^\text{post}$&0.1&$r^\text{post}$&0.1\\
         \midrule
         \multirow{3}{*}{earnings}&PVI-Log&$r^\text{prior}$&0.01&$r^\text{prior}$&1.0\\
         &PVI-CRPS&$r^\text{post}$&0.01&$r^\text{prior}$&0.1\\
         &PVI-IS&$r^\text{prior}$&0.01&$r^\text{post}$&0.1\\
         \midrule
           \multirow{3}{*}{kidscore}&PVI-Log&$r^\text{post}$&0.01&$r^\text{post}$&0.01\\
         &PVI-CRPS&$r^\text{post}$&0.1&$r^\text{post}$&0.1\\
         &PVI-IS&$r^\text{post}$&1.0&$r^\text{post}$&1.0\\
         \midrule
        \multirow{3}{*}{nes}&PVI-Log&$r^\text{post}$&0.01&$r^\text{post}$&0.1\\
         &PVI-CRPS&$r^\text{post}$&0.1&$r^\text{post}$&0.01\\
         &PVI-IS&$r^\text{post}$&0.1&$r^\text{post}$&1.0\\
         \midrule
        \multirow{3}{*}{radon}&PVI-Log&$r^\text{prior}$&1.0&$r^\text{post}$&1.0\\
         &PVI-CRPS&$r^\text{prior}$&1.0&$r^\text{prior}$&1.0\\
         &PVI-IS&$r^\text{post}$&1.0&$r^\text{post}$&0.1\\
         \midrule
        \multirow{2}{*}{wells}&PVI-Log&$r^\text{post}$&0.01&$r^\text{prior}$&0.1\\
         &PVI-Quad&$r^\text{post}$&0.0&$r^\text{post}$&0.0\\
         \bottomrule
    \end{tabular}
    \caption{Optimal regularization setting for PVI on the 7 models from posteriordb.}
    \label{tab:stan2}
\end{table}

\begin{table}[t]
    \centering
    \begin{tabular}{cccccc}
    \toprule
         Model&Method&Log ($\uparrow$)&Quad ($\uparrow$)&CRPS ($\downarrow$)&IS($\downarrow$)\\
         \midrule
         election&NUTS&-1494.89 (1.35) & 0.21 (0.00) & - & -\\
         glmm&NUTS&-1195.54 (2.22) & - & - & -\\
         earnings&NUTS&-298.03 (0.02) & - & 111.34 (0.06) & 229.68 (0.06)\\
         kidscore&NUTS&-591.16 (0.06) & - & 4482.96 (2.31) & 3638.49 (5.87)\\
         nes&NUTS&-185.69 (0.01) & - & 91.75 (0.11) & 94.41 (0.09)\\
         radon&NUTS&-3445.43 (69.26) & - & 1347.92 (27.97) & 35905.94 (427.25)\\
         wells&NUTS&-393.77 (0.01) & 0.18 (0.00) & - & -\\
         \bottomrule
    \end{tabular}
    \caption{Test predictive score (mean and std from 5 random seeds) on 7 models from posteriordb with NUTS using the default hyperparameters in NumPyro.}
    \label{tab:stan_nuts}
\end{table}

\subsection{Additional demonstration of the \texttt{earnings} model}
\begin{figure} 
    \centering
    \begin{minipage}{0.48\textwidth}
        \centering
        \includegraphics[width=0.6\textwidth]{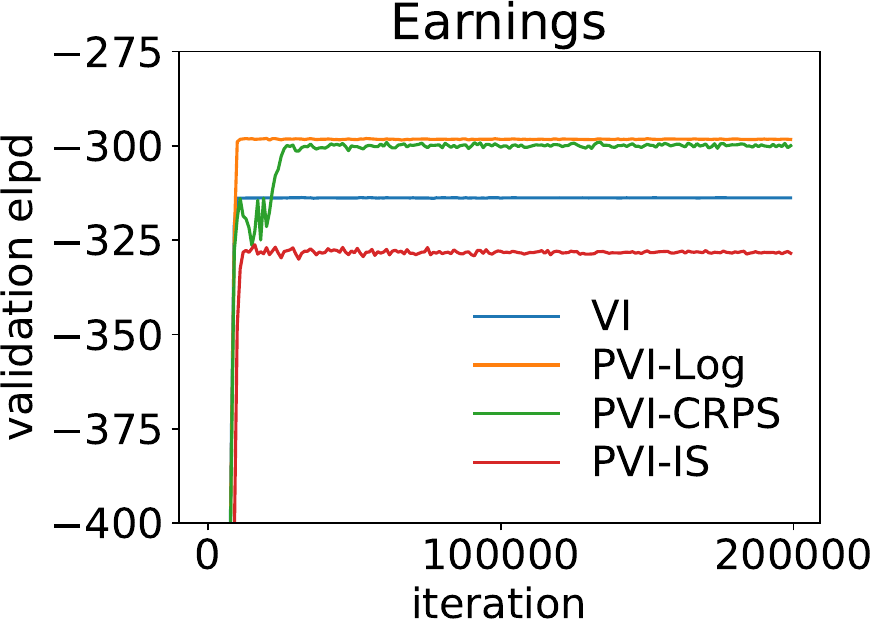}
        \caption{Validation log predictive density as a function of training iteration for the \texttt{earnings} model.}
        \label{fig:earnings_convergence}
    \end{minipage}
    \hfill 
    \begin{minipage}{0.48\textwidth}
        \centering
            \begin{tabular}{cc}
            \toprule
            Method & Training time (min) \\
            \midrule
            VI & 1.96 (0.04) \\
            \midrule
            PVI-Log & 4.11 (0.07) \\
            \midrule
            PVI-CRPS & 10.40 (0.31) \\
            \midrule
            PVI-IS & 57.08 (0.39) \\
            \bottomrule
        \end{tabular}
        \vspace{20pt}
        \captionof{table}{Training time of different methods for the \texttt{earnings} model for 200,000 iterations. }
        \label{tab:earnings_time}
        
    \end{minipage}
\end{figure}
\begin{figure}
    \centering
    \includegraphics[width=0.9\linewidth]{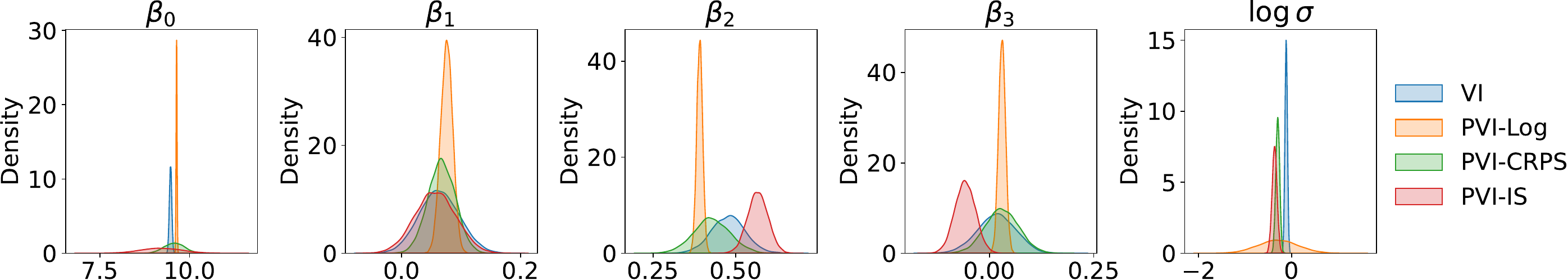}
    \caption{Posterior distribution of parameters for the \texttt{earnings} model with different methods using mean-field Gaussian variational family.}
    \label{fig:earnings_posterior}
\end{figure}
We use the \texttt{earnings} model to demonstrate additional properties of our method. First, in terms of per iteration time during training, PVI with log score is comparable to vanilla VI. However, PVI with CRPS or IS is slower. To illustrate this empirically, we include additional results showing convergence behavior and training time under the log score, CRPS, and IS (Figure \ref{fig:earnings_convergence} and Table \ref{tab:earnings_time}). Second, we empirically observe that the posteriors from different scoring rules usually overlap with each other but are not exactly the same. At a high level, this solution diversity reflects the benefit of multiple scoring rules. To be concrete, in Figure \ref{fig:earnings_posterior}, we compare PVI solutions in the model under various scoring rules. While most posteriors have similar means (e.g., $\beta_0$ is close to 9), the posterior uncertainty differs across scoring rules, each serving a different need in prediction.
\subsection{Scalability of predictive variational inference}
 PVI inherits the scalability properties of the underlying variational family. In particular, using simpler families (e.g., mean-field Gaussian), it can scale to higher dimensions. As a check, we implemented log-score PVI in a normal model with parameter dimension up to 10,000 (see Figure \ref{fig:elpd_dim}). We considered a model $p(y\mid \theta)=\mathrm{MVN}(0,I_d)$ with IID data $y_i\sim \mathrm{MVN}(0,2\; I_d)$
 and used a mean-field Gaussian variational family ($n=100$ and $M=10$). As the dimension of $\theta$
 increases, the expected log predictive density (elpd; higher is better) scales linearly for both VI and PVI. Across all dimensions, PVI consistently achieves higher elpd than VI, indicating improved predictive performance. While this experiment is simple, it demonstrates that PVI can be applied in high-dimensional settings. It remains an important direction for future work to practically push the scalability of PVI in modern deep models.
\begin{figure} [t]
    \centering
        \includegraphics[width=0.5\textwidth]{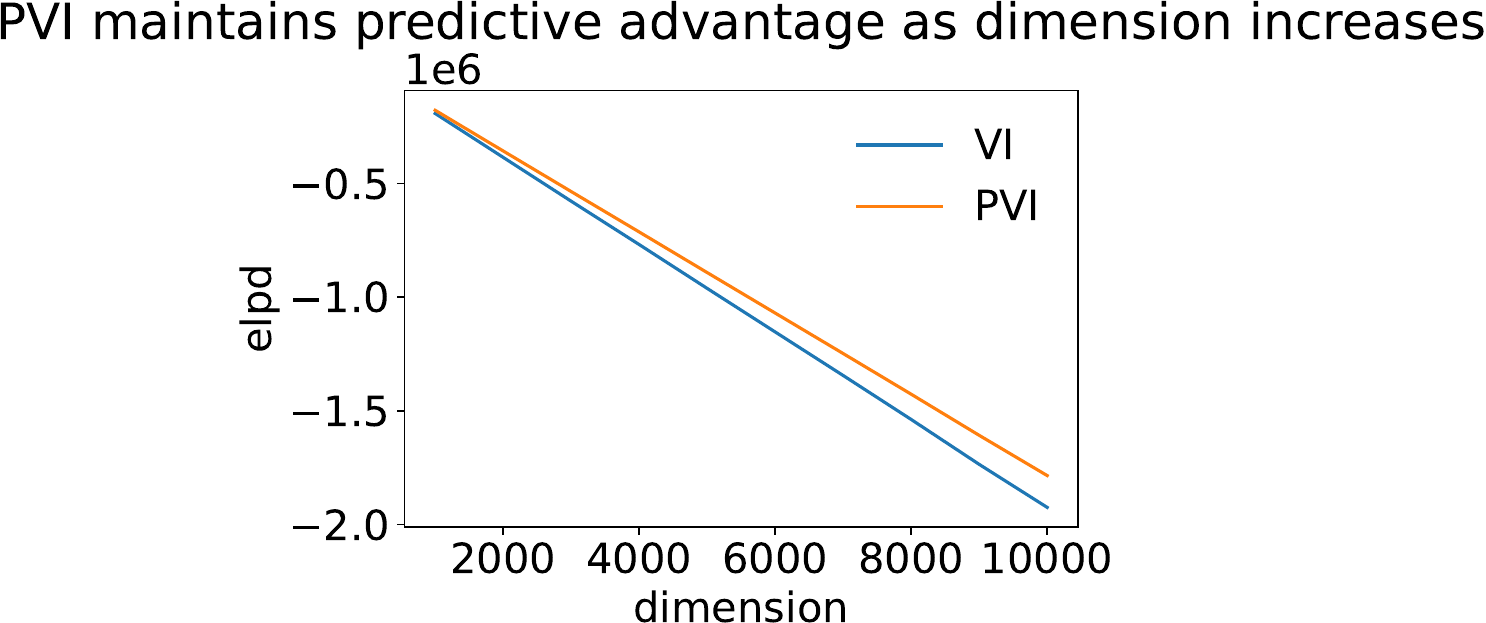}
        \caption{We compare log-score PVI and vanilla VI across varying parameter dimensions (up to 10,000). We consider the model $p(y\mid \theta)=\mathrm{MVN}(0,I_d)$ with IID data $y_i\sim \mathrm{MVN}(0,2\; I_d)$, using a mean-field variational family ($n=100$ and $M=10$). As we increase the dimension of $\theta$, the expected log predictive density (elpd, higher is better) of the induced predictive distributions scales linearly for both VI and PVI. Across all dimensions, PVI consistently achieves higher elpd than VI, indicating improved predictive performance.}
        \label{fig:elpd_dim}
\end{figure}
\section{Experiment settings}
We detail the training settings of our major experiments. Since the experiments involve stochastic optimization and possibly mini-batching, we use the averaged score (or the empirical score) instead of the summations of scores as objectives in practice. The training involving the regression models and the posteriordb models are performed on a single Intel(R) Xeon(R) Gold 6126 CPU. The training involving the cryoEM model is performed on a single NVIDIA Tesla A100 GPU.
\subsection{Election models}
For the election models in Section 4.1, we train VI and PVI for $200,000$ iterations using the learning rate of $10^{-5}$, the RMSProp optimizer \citep{hinton2012neural} and Monte Carlo sample size $M=100$. After $100,000$ iterations the learning rate is reduced to $10^{-6}$. The posterior is chosen to be full-rank Gaussian and no regularization is used for PVI. 
\subsection{cryoEM models}
\begin{figure}
    \centering
    \includegraphics[width=0.5\linewidth]{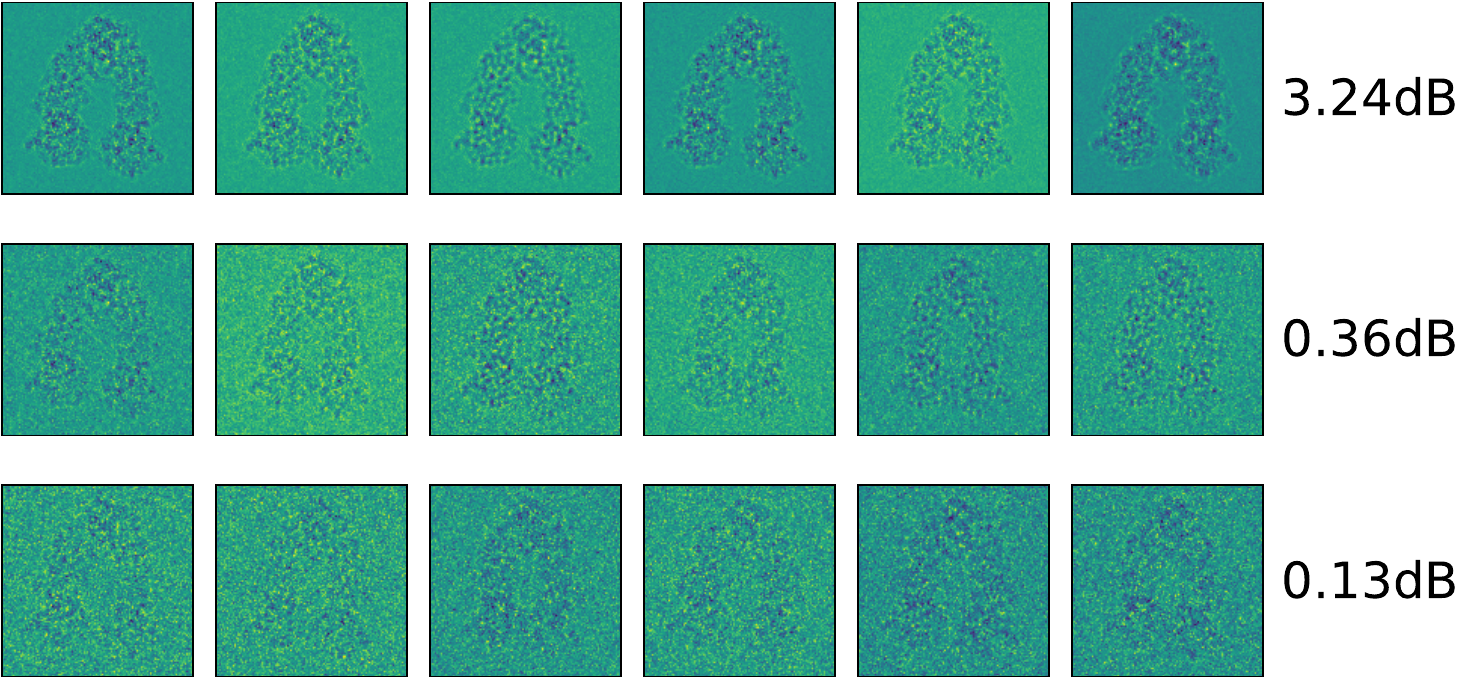}
    \caption{Examples of generated HSP90 protein images under different SNRs.}
    \label{fig:cryoem-example}
\end{figure}
Each image of the generated protein is a gray-scale image with shape 128x128. To simulate real images, a defocus parameter of the microscope is sampled from $[1000, 2000]$ and Gaussian noise is added to the rendered image to reduce the SNR. See Figure \ref{fig:cryoem-example} for generated noisy images under different noise scales. The noise in the images makes the inference task very challenging. 

Figure 1 in the main paper demonstrates the concentration of the posterior from the model $p^{\mathrm{prior}}(\theta)\prod_{i=1}^np(y_i|\theta)$. Because the likelihood is assumed unavailable in this model, the Bayes posterior is approximated by learning a ResNet \citep{he2016deep} based neural ratio estimation~\citep{hermans2020likelihood} with a convolutional embedding network and sampling using 10 independent chains of the No-U-Turn sampler (NUTS) \citep{hoffman2014no}. 

In all of the cryoEM experiments with the CRPS, we use Monte Carlo sample size $M=32$ and mini-batch size $32$ for the training objective. The variational distribution is chosen to be one-dimensional neural spline~\citep{durkan2019neural} with 32 knots. The optimizer is Adam~\citep{kingma2014adam} and the learning rate is scheduled with a linear warm-up to $10^{-3}$ in the first 1,000 steps, and a cosine annealing reducing to $10^{-4}$ in the remaining 9,000 steps.
\subsection{Posteriordb models}
For each of the datasets, we split $60/20/20$ as training, validation and test sets. VI and PVI are trained on the posteriordb models for $200,000$ iterations using the RMSProp optimizer and Monte Carlo sample size $M=100$. Each model has a different learning rate as in Table \ref{tab:posteriordb_lr}. After $100,000$ iterations the learning rate is reduced to $0.1$ of the starting learning rate.
\begin{table}[t]
    \centering
    \begin{tabular}{cccccccc}
    \toprule
         Model&  election&glmm&earnings&kidscore&nes&radon&wells\\
         \midrule
         Likelihood&Bernoulli&Poisson&Normal&Normal&Normal&Normal&Bernoulli\\
         $n$&11566&2072&1192&434&476&12573&3020\\
         Initial learning rate& $10^{-5}$& $10^{-5}$& $10^{-4}$& $10^{-4}$& $10^{-5}$& $10^{-5}$& $10^{-5}$\\
         \bottomrule
    \end{tabular}
    \caption{Specifications of the posteriordb models.}
    \label{tab:posteriordb_lr}
\end{table}
\subsection{Additional regression models}
In Supplement \ref{sec:interpolation}, we demonstrate the interpolation between VI and PVI. Each of the experiments is performed with a learning rate of $10^{-3}$ for 20,000 iterations using the RMSProp optimizer.

In Supplement \ref{sec:regression}, we compare PVI with CRPS against various baselines on a likelihood-free regression model. The dimension of the parameter $\BB$ is chosen to be $d=5$ and the number of observations is $n=1,000$. The distribution of the parameter is approximated by a neural spline flow~\citep{durkan2019neural}. PVI is trained with $M=100$ Monte Carlo samples using the Adam optimizer~\citep{kingma2014adam} and the learning rate is scheduled with a linear warm-up to $10^{-5}$ in the first 1,000 steps, and a cosine annealing reducing to $10^{-6}$ in the remaining 49,000 steps. To stabilize the training of the flow, the global norm of the gradient is clipped to be at most 10. For the baselines, we use 50,000 simulated samples to train NPE, and 10,000 simulated samples to train NRE. The numbers are chosen to match the training and inference time of PVI. In addition, we use NUTS to sample from the model implied by NRE. 
\section{Formulations of the posteriordb models}
We include the mathematical formulas of the posteriordb models in this section. The prior, if not specified, is chosen to be $\text{normal}(0,1)$ for each variable.
\subsection{\texttt{election}}
The election model uses data from seven CBS News polls conducted during the week before the 1988 U.S. presidential election \citep{gelman2007data}. There are in total $n=11566$ datapoints. For data $i$, the age group $a_i\in\{1,2,3,4\}$, education level $e_i\in\{1,2,3,4\}$, state $s_i\in\{1,\ldots,51\}$, region $r_i\in\{1,2,3,4,5\}$, indicator of black ethnicity $b_i\in\{0,1\}$, gender $g_i\in\{0,1\}$, the Republican share of the vote for president in the state in the previous election $p_i\in[0,1]$, and voter outcome $y_i\in\{0,1\}$ are collected. The likelihood model is 
\begin{align}
    y_i\sim\text{Bernoulli}(\mathrm{Logit}^{-1}(\beta_0+\beta_1b_i+\beta_2g_i+\beta_3b_ig_i+\beta_4p_i+\beta_{5,a_i}+\beta_{6,e_i}+\beta_{7,a_i,e_i}+\beta_{8,s_i}+\beta_{9,r_i})).\notag
\end{align}
For $\beta_0, \beta_1,\beta_2,\beta_3,\beta_4$, we assign the prior $\mathrm{normal}(0,100)$. For $\beta_{5,\cdot}$, we assign the prior $\mathrm{normal}(0,\sigma_a)$ where $\sigma_a\sim\mathrm{log\_normal}(3,1)$. Similarly, for $\beta_{6,\cdot}$, we assign the prior $\mathrm{normal}(0,\sigma_e)$ where $\sigma_e\sim\mathrm{log\_normal}(3,1)$. For $\beta_{7,\cdot,\cdot}$, we assign the prior $\mathrm{normal}(0,\sigma_{ac})$ where $\sigma_{ac}\sim\mathrm{log\_normal}(3,1)$. For $\beta_{8,\cdot}$, we assign the prior $\mathrm{normal}(0,\sigma_s)$ where $\sigma_s\sim\mathrm{log\_normal}(3,1)$. For $\beta_{9,\cdot}$, we assign the prior $\mathrm{normal}(0,\sigma_r)$ where $\sigma_r\sim\mathrm{log\_normal}(3,1)$.
\subsection{\texttt{glmm}}
The dataset contains simulated population counts of peregrines in various sites and years in the French Jura over 9 years~\citep{kery2011bayesian}. There are in total $n=2072$ observations. For observation $i$, the site $s_i\in\{1,\ldots,235\}$, the year $y_i\in\{1,\ldots,9\}$ and the observed count $c_i$ are generated. The likelihood model is
\begin{align}
    c_i\sim\text{Poisson}(\exp(\mu+\alpha_{s_i}+\epsilon_{y_i})).\notag
\end{align}
We assign $\mathrm{normal}(0,10)$ as the prior for $\mu$, $\mathrm{normal}(0,\sigma_{s})$ as the prior for $\alpha_{\cdot}$, $\mathrm{normal}(0,\sigma_{y})$ as the prior for $\epsilon_{\cdot}$, $\mathrm{log\_normal}(1,1)$ as the prior for $\sigma_s$, and $\mathrm{log\_normal}(0,1)$ as the prior for $\sigma_{y}$.  
\subsection{\texttt{earnings}}
The dataset studies the effects of heights and gender on income in dollars~\citep{gelman2007data}. There are in total $n=1192$ observations. Each observation $i$ contains the height $h_i\in\mathbb{R}^{+}$, the gender $g_i\in\{0,1\}$ and the income $e_i\in\mathbb{R}^{+}$. The model is
\begin{align}
    \log e_i\sim\text{normal}(\beta_0+\beta_1h_i+\beta_2g_i+\beta_3h_ig_i,\sigma).\notag
\end{align}
During processing of the dataset the heights are centered, so we assign $\text{normal}(0,1)$ to all coefficients and $\text{log\_normal}(0,1)$ to $\sigma$ as priors.
\subsection{\texttt{kidscore}}
The dataset contains cognitive test scores of three and four-year-old children~\citep{gelman2007data}. Data is collected from $n=434$ kids. For kid $i$, the IQ $q_i\in\mathbb{R}^{+}$, mother's IQ $m_i\in\mathbb{R}^{+}$ and mother's completion of high school $h_i \in\{0,1\}i$ are collected. The likelihood model is
\begin{align}
    q_i\sim\mathrm{normal}(\beta_0+\beta_1h_i+\beta_2m_i+\beta_3h_im_i,\sigma).\notag
\end{align}
Also, we centered the data so $\text{normal}(0,1)$ is assigned as prior to all coefficients and $\text{normal$^+$}(0,1)$ is assigned to $\sigma$ as prior.
\subsection{\texttt{nes}}
The dataset contains the party identification information of individuals from the National Election Study of the United States in the year 2000~\citep{gelman2007data}. There are $n=476$ datapoints. Each datapoint $i$ includes the party identification $p_i\in\{1,\ldots, 7\}$ ($1$ is strong Democrat and $7$ is strong Republican), real political ideology $d_i\in\{1,\ldots,7\}$ ($1$ is strong liberal and $7$ is strong conservative), race $r_i\in\{0,0.5,1\}$, education level $e_i\in\{1,2,3,4\}$, gender $g_i\in\{0,1\}$, income percentile $l_i\in\{1,2,3,4,5\}$, age group $a_i\in\{1,2,3,4\}$. The likelihood model contains multiple terms
\begin{align}
    p_i\sim\mathrm{normal}(\beta_0+\beta_1d_i+\beta_2r_i+\beta_3\mathbb{I}(a_i=2)+\beta_4\mathbb{I}(a_i=3)+\beta_5\mathbb{I}(a_i=4)+\beta_6e_i+\beta_7g_i+\beta_8l_i,\sigma).\notag
\end{align}
Still, $\text{normal}(0,1)$ is assigned as prior to all coefficients and $\text{normal$^+$}(0,1)$ is assigned to $\sigma$ as prior.
\subsection{\texttt{radon}}
The dataset contains data of radon levels in 386 different counties in the USA~\citep{gelman2007data}. $n=12573$ measurements are collected. In the $i$th measurement, the county id $c_i\in\{1,\ldots, 386\}$, the floor measure $f_i\in\{0,\ldots,9\}$ and the log-radon measurement $r_i$ are collected. We work with the model
\begin{align}
    r_i\sim\mathrm{normal}(\alpha_{c_i}+\beta_{c_i}f_i,\sigma). \notag
\end{align}
Each $\alpha_\cdot$ is assigned the prior $\mathrm{normal}(\mu_\alpha,\sigma_\alpha)$ and each $\beta_\cdot$ is assigned the prior $\mathrm{normal}(\mu_\beta,\sigma_\beta)$. Furthermore, $\mu_\alpha$ and $\mu_{\beta}$ are both assigned the prior $\mathrm{normal}(0,10)$. $\sigma, \sigma_\alpha,\sigma_\beta$ are all assigned the prior $\text{log\_normal}(0,1)$.
\subsection{\texttt{wells}}
The dataset studies factors affecting the decision of households in Bangladesh to switch wells~\citep{gelman2007data}. There are in total $n=3020$ datapoints. Each datapoint $i$ includes the arsenic level $a_i\in\mathbb{R}^+$, the distance to the closest known safe well $d_i\in\mathbb{R}^+$, years of education of household head $e_i\in\mathbb{R}^+$ and decision of switching the well $s_i\in\{0,1\}$. The regression model is
\begin{align}
    s_i\sim\mathrm{Bernoulli}(\mathrm{Logit}^{-1}(\beta_0+\beta_1a_i+\beta_2d_i+\beta_3e_i+\beta_4a_id_i+\beta_5a_ie_i+\beta_6d_ie_i)).\notag
\end{align}
After centering the data, the coefficients are all assigned $\mathrm{normal}(0,1)$ as priors.


\end{document}